\newcommand{\be}{\begin{eqnarray} \begin{aligned}}
\newcommand{\ee}{\end{aligned} \end{eqnarray} }
\newcommand{\benn}{\begin{eqnarray*} \begin{aligned}}
\newcommand{\eenn}{\end{aligned} \end{eqnarray*} }
\newcommand{\vx}{{\mathbf x}}
\newcommand{\vz}{{\mathbf z}}
\newcommand{\kl}{{D_{KL}}}
\newcommand{\dec}{{p_{\theta} (\vx|\vz)}}
\newcommand{\enc}{{q_{\phi} (\vz|\vx)}}
\newcommand{\lld}{{p_{\theta} (\vx)}}
\newcommand{\pz}{{p(\vz)}}
\newcommand{\px}{{p_{\theta}(\vx)}}
\newcommand{\qd}{{{q}(\vx)}}
\newcommand{\qz}{{q_{\phi}(\vz)}}
\newcommand{\iq}{{I_{q}(X;Z)}}
\newcommand{\eq}{{\mathbb{E}_{q}}}
\newcommand{\bfeps}{{\mathbf{\epsilon}}}
\newcommand{\ellprime}{{\ell^{\prime}}}
\newcommand{\dmax}{{d_{\max}}}
\newcommand{\rad}{{r}}
\newcommand{\iidsim}{{\overset{iid}{\sim}}}
\def\thm@space@setup{%
  \thm@preskip=\parskip \thm@postskip=0pt
}
\newtheorem{theorem}{Theorem}[section]
\newtheorem*{theorem*}{Theorem}
\newtheorem{lemma}[theorem]{Lemma}
\newenvironment{definition}[1][Definition]{\begin{trivlist}
\item[\hskip \labelsep {\bfseries #1}]}{\end{trivlist}}
\title{Exact Rate-Distortion in Autoencoders via Echo Noise}
\author{%
    Rob Brekelmans, Daniel Moyer, Aram Galstyan, Greg Ver Steeg \\
    Information Sciences Institute \\
    University of Southern California\\
    Marina del Rey, CA 90292 \\
    \texttt{brekelma, moyerd@usc.edu; galstyan, gregv@isi.edu} \\
}
\begin{document}

\maketitle 

\begin{abstract}
Compression is at the heart of effective representation learning. However, lossy compression is typically achieved through simple parametric models like Gaussian noise to preserve analytic tractability, and the limitations this imposes on learning are largely unexplored.  Further, the Gaussian prior assumptions in models such as variational autoencoders (VAEs) provide only an upper bound on the compression rate in general.  We introduce a new noise channel, \emph{Echo noise}, that admits a simple, exact expression for mutual information for arbitrary input distributions.  The noise is constructed in a data-driven fashion that does not require restrictive distributional assumptions.  With its complex encoding mechanism and exact rate regularization, Echo leads to improved bounds on log-likelihood and dominates $\beta$-VAEs across the achievable range of rate-distortion trade-offs. Further, we show that Echo noise can outperform flow-based methods without the need to train additional distributional transformations.

\end{abstract}

\section{Introduction}

Rate-distortion theory provides an organizing principle for representation learning that is enshrined in machine learning as the Information Bottleneck principle~\citep{tishby2000information}. 
The goal is to compress input random variables $X$ into a representation $Z$ with mutual information rate $I(X;Z)$, while minimizing a distortion measure that captures our ability to use the representation for a task.  For the rate to be restricted, some information must be lost through noise. Despite the use of increasingly complex encoding functions via neural networks, simple noise models like Gaussians still dominate the literature because of their analytic tractability. Unfortunately, the effect of these assumptions on the quality of learned representations is not well understood. 

The Variational Autoencoding (VAE) framework \citep{kingma2013auto, rezende2014stochastic} has provided the basis for a number of recent developments in representation learning \citep{informationdropout, chen2018isolating, chen2016variational, higgins2017, kim2018disentangling,  tschannen2018recent}.  While VAEs were originally motivated as performing posterior inference under a generative model, several recent works have viewed the Evidence Lower Bound objective as corresponding to an unsupervised rate-distortion problem \citep{informationdropout, alemi2018fixing, rezende2018taming}.  From this perspective, reconstruction of the input provides the distortion measure, while the KL divergence between encoder and prior gives an upper bound on the information rate that depends heavily on the choice of prior \citep{alemi2018fixing,  rosca2018distribution, tomczak2017vae}.

In this work, we deconstruct this interpretation of VAEs and their extensions.  Do the restrictive assumptions of the Gaussian noise model limit the quality of VAE representations?  Does forcing the latent space to be independent and Gaussian constrain the expressivity of our models?  We find evidence to support both claims, showing that a powerful noise model can achieve more efficient lossy compression and that relaxing prior or marginal assumptions can lead to better bounds on both the information rate and log-likelihood. 

The main contribution of this paper is the introduction of the Echo noise channel, a powerful, data-driven improvement over Gaussian channels whose compression rate can be precisely expressed for arbitrary input distributions.  Echo noise is constructed from the empirical distribution of its inputs, allowing its variation to reflect that of the source (see Fig.~\ref{fig:example}). We leverage this relationship to derive an analytic form for mutual information that avoids distributional assumptions on either the noise or the encoding marginal.  Further, the Echo channel avoids the need to specify a prior, and instead implicitly uses the optimal prior in the Evidence Lower Bound.  This marginal distribution is neither Gaussian nor independent in general.

\begin{figure}
  \centering
  \includegraphics[scale=.4]{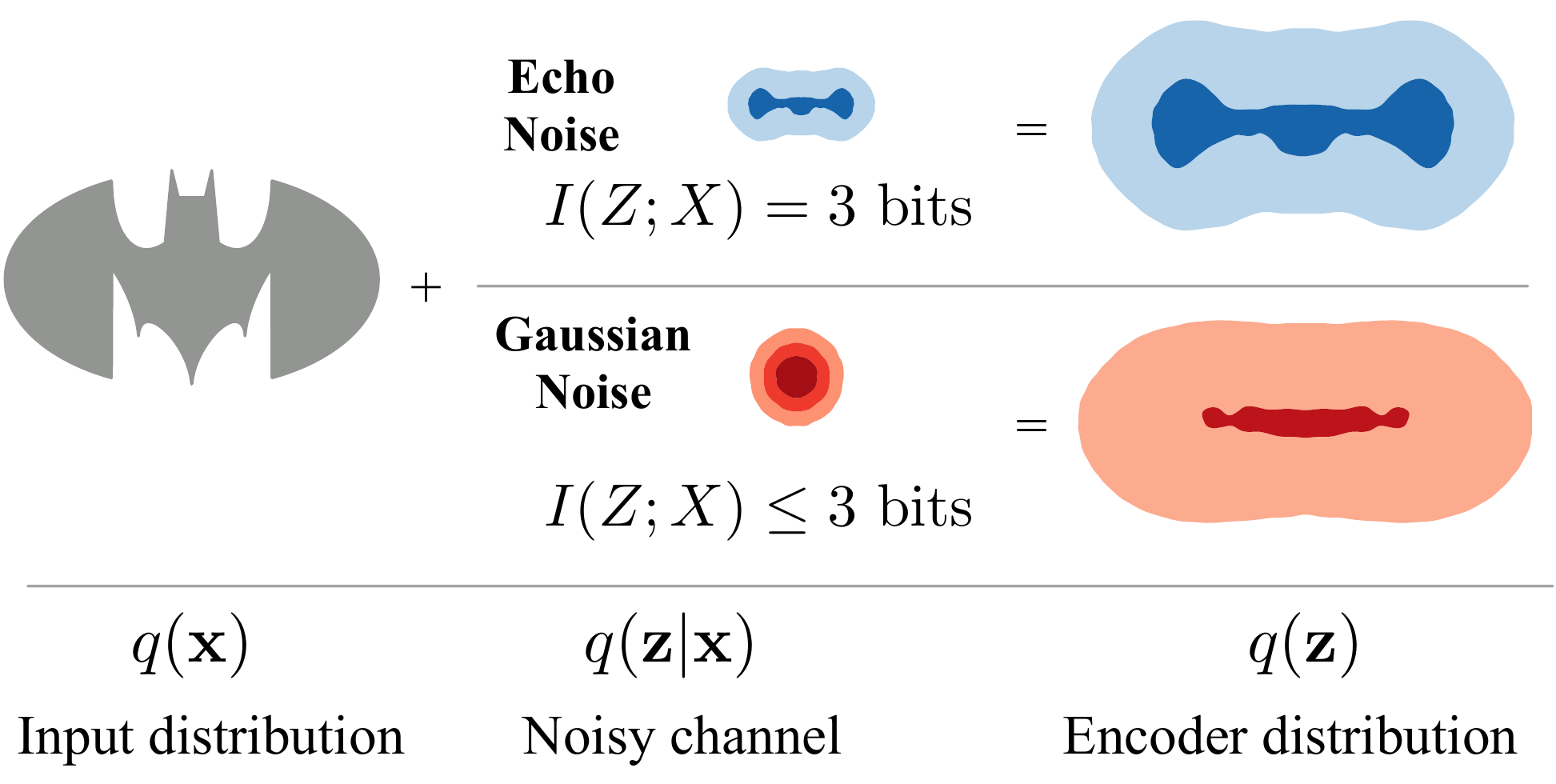} 
  \caption{For a noisy channel characterized by $\vz = \vx + s \bfeps$, we compare drawing the noise, $\bfeps$, from a Gaussian distribution (as in VAEs) or an Echo distribution.}
  \label{fig:example}
\end{figure}

After introducing the Echo noise channel and an exact characterization of its information rate in Sec.~\ref{sec:echo}, we proceed to interpret Variational Autoencoders from an encoding perspective in Sec.~\ref{sec:vae}.  We formally define our rate-distortion objective in Sec.~\ref{sec:problem}, and draw connections with recent related works in Sec.~\ref{sec:related}.  Finally, we report log likelihood results, \replaced{ visualize the space of compression-reconstruction trade-offs, and evaluate disentanglement in Echo representations in Sec.~\ref{sec:results}.}{ and visualize the space of possible compression-reconstruction trade-offs in Sec.~\ref{sec:results}}


\section{Echo Noise} \label{sec:echo} 

To avoid learning representations that memorize the data, we would like to constrain the mutual information between the input $X$ and the representation $Z$. Since we have freedom to choose how to encode the data, we can design a noise model that facilitates calculating this generally intractable quantity.


The Echo noise channel has a shift-and-scale form that mirrors the reparameterization trick in VAEs.  Referring to the observed data distribution as $\qd$, with $\vz \in \mathbb R^{d_z}, \vx \in \mathbb R^{d_x}$, we can define the stochastic encoder $q_\phi(\vz |\vx)$ using: 
\be
\vz   & =   f (\vx) +  S(\vx) \bfeps \label{eq:reparam}
\ee
For brevity, we omit the subscripts that indicate that the functions $f: \mathbb R^{d_x} \rightarrow \mathbb R^{d_z}$ and matrix function $S:  \mathbb R^{d_x} \rightarrow \mathbb R^{d_z} \times  \mathbb R^{d_z}$ depend on neural networks parameterized by $\phi$. All that remains to specify the encoder is to fix the distribution of the noise variable, $q(\bfeps)$. For VAEs, the noise is typically chosen to be Gaussian, $\bfeps \sim \mathcal N(0, \mathbb I_{d_z})$. \footnote{Our approach is also easily adapted to multiplicative noise, such as in \citet{informationdropout}.}


With the goal of calculating mutual information, we will need to compare the marginal entropy $H(Z)$, which integrates over samples $\vx$, and the conditional entropy $H(Z|X)$, whose stochasticity is only due to the noise for deterministic $f(\vx)$ and $S(\vx)$. 
The choice of noise will affect both quantities, and our approach is to relate them by enforcing an equivalence between the distributions $q(\vz)$ and $q(\bfeps)$.  

Since $q(\vz) = \int \enc \qd d\vx$ is defined in terms of the source, we can also imagine constructing the noise in a data-driven way.  For instance, we could draw $\bfeps = f(x'), x' \, \iidsim \qd$ in an effort to make the noise match the channel output.  However, this changes the distribution of $Z$ and the noise would need to be updated to continue resembling the output.  

Instead, by iteratively applying Eq.~\ref{eq:reparam}, we can guarantee that the noise and marginal distributions match in the limit. Using superscripts to indicate iid samples $\vx^\ell \, \iidsim \qd$, we draw $\bfeps$ according to:
\begin{align}
\bfeps &= f(\vx^{0}) + S(\vx^{0})\bigg( f(\vx^{1})  +  S(\vx^{1}) \Big(f(\vx^2) + S(\vx^2) \big(... \nonumber  \\ 
& = f(\vx^{0}) + S(\vx^{0}) f(\vx^{1})  +  S(\vx^{0}) S(\vx^{1}) f(\vx^{2}) ... \label{eq:noise_poly}  
\end{align}


\added{Echo noise is thus constructed using an infinite sum over attenuated ``echoes'' of the transformed data samples}.  This can be written more compactly as follows. 

\begin{definition}[Definition: Echo Noise]  The Echo noise distribution $ E(f(\vx), S(\vx), q(\vx))$ is defined for functions $f, S,$ and probability density function $q$ over $\vx \in \mathbb R^{d_x}$, by sampling according to the following procedure. 
\be
\bfeps  & = \sum \limits_{\ell=0}^{\infty} \left(\prod \limits_{\ellprime= 1}^{\ell} S(\vx^{\ellprime})  \right) f(\vx^{\ell}), \qquad \vx^{\ell}\,  \iidsim  \qd \label{eq:noise} 
\ee 
\end{definition}

Although the noise distribution may be complex, it has the interesting property that it exactly matches the eventual output marginal $\qz$.  

\begin{lemma}[Echo noise matches channel output]\label{equivalence}
If $\bfeps \sim Echo(f(\vx), S(\vx), q(\vx))$ and $\vz=f (\vx) +  S(\vx) \bfeps$, then $\vz$ has the same distribution as $\bfeps$.
\end{lemma}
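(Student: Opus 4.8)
The plan is to exploit the self-similar, recursive structure of the infinite series defining the Echo noise. Reading the nested form in Eq.~\ref{eq:noise_poly}, I would peel off the $\ell=0$ term and factor $S(\vx^{0})$ out of the entire remaining tail, writing
\[
\bfeps \;=\; f(\vx^{0}) + S(\vx^{0})\,\bfeps',
\]
where $\bfeps'$ denotes the tail left after factoring $S(\vx^{0})$, namely the very same Echo series of Eq.~\ref{eq:noise} but built from the iid samples $\vx^{1}, \vx^{2}, \dots$ in place of $\vx^{0}, \vx^{1}, \dots$. The content of the lemma is essentially that the Echo law is a distributional fixed point of the channel map $W \mapsto f(X) + S(X)\,W$, and this factorization exhibits that fixed point explicitly.

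The key observation is that the tail $\bfeps'$ is an independent copy of $\bfeps$ itself. Since $\vx^{0}, \vx^{1}, \vx^{2}, \dots \iidsim \qd$, the reindexing $\vx^{k} \mapsto \vx^{k+1}$ is just a relabeling of iid coordinates, so $\bfeps'$ has exactly the series form of $\bfeps$; hence $\bfeps' \overset{d}{=} \bfeps$, and because $\bfeps'$ involves only $\{\vx^{\ell}\}_{\ell \ge 1}$ it is independent of $\vx^{0} \sim \qd$. I would then compare this with the channel $\vz = f(\vx) + S(\vx)\bfeps$, in which $\vx \sim \qd$ is drawn independently of the Echo-distributed $\bfeps$. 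The right-hand side has precisely the functional form $f(X) + S(X)\,W$ with $X \sim \qd$ and $W \sim \mathrm{Echo}$ independent, which is exactly the law of the peeled representation above. Therefore $\vz \overset{d}{=} \bfeps$, which is the claim.

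The main obstacle is convergence: before any reindexing or factoring is legitimate, I must ensure the infinite series converges so that $\bfeps$ is a well-defined random variable and the peeled tail $\bfeps'$ genuinely shares its law. Since the coefficients are products of the matrix-valued $S(\vx^{\ellprime})$, I would impose (or invoke as a standing assumption) that $S$ is contractive, e.g. its operator norm is bounded by some $r < 1$. Submultiplicativity of the operator norm together with independence then yields $\mathbb{E}\,\big\|\prod_{\ellprime=1}^{\ell} S(\vx^{\ellprime})\big\| \le r^{\ell}$, so the partial sums converge almost surely and in $L^{1}$ by a Borel--Cantelli and dominated-convergence argument. Once convergence is secured the distributional fixed-point argument goes through cleanly; alternatively, to sidestep pathwise convergence, one could verify the same fixed-point identity at the level of characteristic functions.
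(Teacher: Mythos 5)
Your argument is correct and is essentially the paper's own proof: the paper also establishes the lemma by relabeling the iid sample indices so that the conditioning example plays the role of $\vx^0$ in the noise series, which is exactly your peeling of the $\ell=0$ term and identification of the tail as an independent Echo copy. Your convergence caveat is handled separately in the paper by Lemma~\ref{condition} (boundedness of $f$ and spectral radius of $S$ below one), so nothing is missing.
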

We can observe this relationship by simply re-labeling the sample indices in the expanded expression for the noise in Eq.~\ref{eq:noise_poly}.  In particular, the training example that we condition on in Eq.~\ref{eq:reparam} corresponds to the first sample $\vx^0$ in a draw from the noise.  This equivalence is the key insight leading to an exact expression for the mutual information: 


\begin{theorem}[Echo Information]\label{thm:mi} For any source distribution $\qd$, and a noisy channel defined by Eq.~\ref{eq:reparam} that satisfies \ref{condition}, the mutual information is as follows:
\be\label{eq:mi}
I(X;Z) = - \mathbb{E}_\vx \log  |\det S(\vx)| 
\ee
\end{theorem}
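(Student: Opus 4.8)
The plan is to compute $I(X;Z) = H(Z) - H(Z|X)$ directly from differential entropies and to show that a common noise-entropy term cancels, leaving only the log-determinant of the scaling matrix. Both entropies will be expressed in terms of the fixed Echo noise distribution $q(\bfeps)$, so that the cancellation is transparent. The only inputs I expect to need are Lemma~\ref{equivalence} and the regularity guaranteed by \ref{condition}.

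First I would handle the marginal term with no real computation: Lemma~\ref{equivalence} states that $\vz$ and $\bfeps$ are identically distributed, so immediately $H(Z) = H(\bfeps)$. Next I would treat the conditional term by fixing $\vx$. Conditioned on $X=\vx$, the functions $f(\vx)$ and $S(\vx)$ are constants and the only randomness in $\vz = f(\vx) + S(\vx)\bfeps$ comes from $\bfeps$, which is drawn from its own fixed Echo distribution independently of $\vx$. Thus the map $\bfeps \mapsto \vz$ is an invertible affine transformation with constant matrix $S(\vx)$, and the standard change-of-variables rule for differential entropy gives $H(Z \mid X=\vx) = H(\bfeps) + \log|\det S(\vx)|$. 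Averaging over $\vx \sim \qd$ yields $H(Z\mid X) = H(\bfeps) + \mathbb{E}_\vx \log|\det S(\vx)|$. Subtracting, the $H(\bfeps)$ terms cancel and leave $I(X;Z) = -\mathbb{E}_\vx \log|\det S(\vx)|$, exactly Eq.~\ref{eq:mi}.

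The hard part, and where Condition~\ref{condition} does the work, is justifying the affine change-of-variables step and the cancellation. The Jacobian formula requires $S(\vx)$ to be invertible (almost surely) and square, so I would verify that \ref{condition} supplies this; I expect it also guarantees convergence of the infinite Echo series in Eq.~\ref{eq:noise} (e.g.\ a contraction-type bound on $S$) so that $q(\bfeps)$ is a genuine, well-defined density. A subtle point I would take care to address is the finiteness of $H(\bfeps)$: the cancellation $H(\bfeps) - H(\bfeps)$ is only legitimate when this entropy is finite rather than an indeterminate $\infty - \infty$, so I would argue (again via \ref{condition}) that $\bfeps$ has a density with finite differential entropy. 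If $S(\vx)$ could be rank-deficient in some directions the naive Jacobian argument would break and those degenerate directions would have to be handled separately, but I anticipate \ref{condition} rules this out.
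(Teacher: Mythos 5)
Your proof is correct and follows essentially the same route as the paper's: decompose $I(X;Z)=H(Z)-H(Z|X)$, apply the translation-invariance and scaling properties of differential entropy to get $H(Z|X=\vx)=H(\bfeps)+\log|\det S(\vx)|$, and cancel $H(Z)=H(\bfeps)$ via Lemma~\ref{equivalence}. Your additional attention to the invertibility of $S(\vx)$ and the finiteness of $H(\bfeps)$ is a careful touch that the paper leaves implicit, but it does not change the argument.
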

\begin{proof}
\replaced{We start by expanding the definition of mutual information in terms of entropies.   Since $f(\vx)$ and $S(\vx)$ are deterministic, we treat them as constants after conditioning on $X=\vx$.  The stochasticity underlying $H(Z|X=\vx)$ is thus only due to the random variable $\bfeps$.} { We start by expanding the definition of mutual information in terms of entropies, reiterating that the stochasticity underlying $H(Z|X=\vx)$ after conditioning on $X=\vx$ is due only to the random variable $\bfeps$. }
\be 
I(X;Z) &= H(Z) - H(Z|X) \nonumber \\
&=  H(Z)- \mathbb{E}_{\vx} \, H( f(\vx) + S(\vx) \mathcal E \, |  \, X= \vx ) \nonumber \\
&=  H(Z) -  \mathbb{E}_{\vx} \, H( S(\vx)  \mathcal E \, |  \, X= \vx )\nonumber \\
& = H(Z) -   H(\mathcal{E}) - \mathbb{E}_{\vx} \log  |\det S(\vx)| \nonumber \\
& = - \mathbb{E}_{\vx} \log  |\det S(X)| 
\ee
  \replaced{We have used the translation invariance of differential entropy in the third line, and the scaling property in the fourth line \cite{cover}.  The entropy terms cancel as a result of Lemma~\ref{equivalence}. } {  Since $f(\vx)$ and $S(\vx)$ are deterministic, we treat them as constants inside the expectation.  We use the translation invariance of differential entropy in the third line, and the scaling property in the fourth line \cite{cover}. Finally, we use Lemma~\ref{equivalence} to cancel out the entropy terms.}
\end{proof}

 In this work, we consider only diagonal $S(\vx) \equiv \mbox{diag}(s_1(\vx), \ldots, s_{d_z}(\vx))$ as is typical for VAEs, so that the determinant in Eq.~\ref{eq:mi} simplifies as $I(X;Z) = - \sum_j \mathbb{E}_{\vx} \log |s_j(\vx)| = \sum_j I(X;Z_j).$  

Finally, we note that the noise distribution $q(\bfeps)$ is only defined implicitly through a sampling procedure. For this to be meaningful, we must ensure that the infinite sum converges. 
\begin{lemma}\label{condition}
The infinite sum in Eq.~\ref{eq:noise} converges, and thus Echo noise sampling is well-behaved, if  $\forall \vx, \, \exists M $ s.t. $ |f(\vx)| \leq M $ and $\rho(S(\vx)) < 1$, where $\rho$ is the spectral radius.
\end{lemma}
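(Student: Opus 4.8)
The plan is to show that the random series in Eq.~\ref{eq:noise} converges absolutely almost surely by dominating the norm of each summand by the term of a convergent geometric series. First I would fix a submultiplicative matrix norm with its compatible vector norm and bound the $\ell$-th summand using submultiplicativity together with the hypothesis $|f(\vx)|\le M$:
\[
\left\| \Big(\prod_{\ellprime=1}^{\ell} S(\vx^{\ellprime})\Big) f(\vx^{\ell}) \right\| \;\le\; \|f(\vx^{\ell})\|\,\prod_{\ellprime=1}^{\ell}\|S(\vx^{\ellprime})\| \;\le\; M\prod_{\ellprime=1}^{\ell}\|S(\vx^{\ellprime})\|.
\]
It then suffices to bound the expected sum of these norms, since a nonnegative series with finite expectation is finite almost surely (by monotone convergence), giving absolute convergence of the series in Eq.~\ref{eq:noise}.

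For the diagonal channel used throughout the paper, $S(\vx)=\mbox{diag}(s_1(\vx),\ldots,s_{d_z}(\vx))$, the condition $\rho(S(\vx))<1$ reads $|s_j(\vx)|<1$ for every coordinate and every $\vx$, and the product of diagonal matrices is the diagonal of the coordinatewise products. Working coordinatewise and using that the $\vx^{\ellprime}$ are iid, I would factor the expectation across independent factors, $\mathbb{E}\prod_{\ellprime=1}^{\ell}|s_j(\vx^{\ellprime})| = \big(\mathbb{E}\,|s_j(\vx)|\big)^{\ell}$. Because $|s_j(\vx)|<1$ pointwise and $\qd$ is a probability measure, $\gamma_j := \mathbb{E}\,|s_j(\vx)| < 1$, so $\mathbb{E}\,|\bfeps_j| \le M\sum_{\ell\ge 0}\gamma_j^{\ell} = M/(1-\gamma_j) < \infty$, and the geometric bound closes the argument. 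If one assumes the stronger uniform bound $\sup_{\vx}|s_j(\vx)|\le r<1$, the same conclusion follows deterministically, giving the explicit almost-sure bound $|\bfeps_j|\le M/(1-r)$ without taking expectations.

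The hard part is the non-diagonal case. Pointwise $\rho(S(\vx))<1$ does not imply $\|S(\vx)\|<1$ in any fixed submultiplicative norm, and, more seriously, a product of distinct matrices each with spectral radius below one need not decay at all, so the factorization step above has no analogue through $\rho$ alone. Closing this gap rigorously requires either strengthening the hypothesis to a uniform operator-norm bound $\sup_{\vx}\|S(\vx)\|\le r<1$, or invoking the theory of products of iid random matrices to argue that a negative top Lyapunov exponent, $\lim_{\ell}\tfrac{1}{\ell}\log\big\|\prod_{\ellprime=1}^{\ell}S(\vx^{\ellprime})\big\| < 0$ almost surely, forces the partial products, and hence the summands, to decay geometrically so that the series converges. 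Since the paper only uses the diagonal setting, the elementary route above is what I would present, flagging that the general statement implicitly rests on the stronger uniform bound.
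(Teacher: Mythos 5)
Your proposal is correct and rests on the same core mechanism the paper uses---dominating the series by a geometric one---but it is both more careful and more honest about the hypotheses than the paper's own treatment. The paper never proves the lemma formally in the main text; its justification lives in App.~\ref{app:implementation}, where it assumes a \emph{uniform} bound $\rho(S(\vx)) \leq \rad < 1$ (enforced in practice by the clipped sigmoid $s_j(\vx) = \rad\,\sigma(\cdot)$ with diagonal $S$) and bounds the tail deterministically by $M \rad^{\dmax}/(1-\rad)$. Your expectation-based route, factoring $\mathbb{E}\prod_{\ellprime}|s_j(\vx^{\ellprime})| = (\mathbb{E}|s_j(\vx)|)^{\ell}$ over the iid samples and invoking monotone convergence, buys something real: it yields almost-sure absolute convergence under the weaker pointwise condition $|s_j(\vx)|<1$ even when $\sup_\vx |s_j(\vx)| = 1$, which is exactly the regime the lemma as literally stated permits and the paper's deterministic geometric bound does not cover. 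Your flag on the non-diagonal case is also a genuine catch rather than pedantry: pointwise $\rho(S(\vx))<1$ controls neither any fixed submultiplicative norm nor the decay of products of \emph{distinct} matrices, so the lemma in full generality implicitly needs either a uniform operator-norm bound $\sup_\vx \|S(\vx)\| \leq \rad < 1$ or a negative top Lyapunov exponent for the random matrix product; the paper escapes this only because it restricts to diagonal $S(\vx)$, where the spectral radius coincides with the $\ell^\infty$ operator norm and is submultiplicative. Presenting the elementary diagonal argument while noting the strengthened hypothesis needed in general, as you propose, is the right call.
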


In App.~\ref{app:implementation}, we discuss several implementation choices to guarantee that these conditions are met and that Echo noise can be accurately sampled using a finite number of terms.  This \replaced{is}{ . can be} particularly difficult in the high noise, low information regime, as zero mutual information ($s_j(\vx)) = 1 \, \forall \, \vx, j$) would imply an infinite amount of noise.  To avoid this issue and ensure precise sampling, we \added{ clip the magnitude of $s_j(\vx)$ so that, for a given $M$ and number of samples, the sum of remainder terms is guaranteed to be within machine precision.}  This imposes a lower bound on the achievable rate across the Echo channel, which depends on the number of terms considered and can be tuned by the practitioner. 

\subsection{Properties of Echo Noise} \label{sec:echo_properties} 



We can visualize applying Echo noise to a complex input distribution in Fig.~\ref{fig:example}, using the identity transformation $f(\vx) = \vx$ and constant noise scaling $s_j(\vx) = .5$.  Here, we directly observe the equivalence of the noise and output distributions.  Further, the data-driven nature of the Echo channel means it can leverage the structure in the (transformed) input to destroy information in a more targeted way than spherical Gaussian noise. 

In particular, Echo's ability to add noise that is correlated across dimensions distinguishes it from common diagonal noise models. It is important to note that the noise still reflects the dependence in $f(\vx)$ even when $S(\vx)$ is diagonal.  In fact, we show in App.~\ref{app:tc} that $TC(Z) = TC(Z|X)$ for the diagonal case, where total correlation measures the divergence from independence, e.g. $TC(Z|X) = \kl [ q(\vz|\vx) || \prod q(z_j|\vx)]$ \citep{watanabe}.

In the setting of learned $f(\vx)$ and $S(\vx)$, notice that the noise depends on the parameters.  This means that training gradients are propagated through $\bfeps$, unlike traditional VAEs where $q(\bfeps)$ is fixed.   This may be a factor in improved performance:  data samples are used as both signal and noise in different parts of the optimization, leading to a more efficient use of data.


Finally, the Echo channel fulfills several of the desirable properties that often motivate Gaussian noise and prior assumptions.  Eqs.~\ref{eq:reparam} and \ref{eq:noise} define a simple sampling procedure that only requires a supply of iid samples from the input distribution.  It is easy to sample both the noise and conditional distributions for the purposes of evaluating expectations, while Echo also provides a natural way to sample from the true encoding marginal $\qz$ via its equivalence with $q(\bfeps)$.  While we cannot evaluate the density of a given $\vz$ under $\enc$ or $\qz$, as might be useful in importance sampling \cite{burda2015importance}, we can characterize their relationship \textit{on average} using the mutual information in Eq.~\ref{eq:mi}.  These ingredients make Echo noise useful for learning representations within the autoencoder framework.

\section{Lossy Compression in VAEs} \label{sec:vae}
Variational Autoencoders (VAEs) \citep{kingma2013auto,rezende2014stochastic} seek to maximize the log-likelihood of data under a latent factor generative model defined by $p_\theta(\vx,\vz) =  \pz p_\theta(\vx|\vz)$, where $\theta$ represents parameters of the generative model decoder and $p(\vz)$ is the prior distribution over latent variables. However, maximum likelihood is intractable in general due to the difficult integral over $Z$, $\log \px = \log \int \pz \dec d\vz$.

To avoid this problem, VAEs introduce a variational distribution, $q_\phi(\vz|\vx)$, which encodes the input data $\qd$ and approximates the generative model posterior $p_{\theta}(\vz|\vx)$.
This leads to the tractable (average) Evidence Lower Bound (ELBO) on likelihood: 
\be\label{eq:elbo}
\eq \log \px & \geq \eq \log \px - \kl [ \enc || p_{\theta}(\vz|\vx) ] \\
& = \eq \log \dec  - \kl [ \enc   || p(\vz) ]
\ee

The connection between VAEs and rate-distortion theory can be seen using a decomposition of the KL divergence term from \citet{elbosurgery}.  
\begin{align}\label{eq:mi_ub2}
\kl [ \enc   || p(\vz) ] &= \kl [ \enc   || \qz ] + \kl[\qz || p(\vz)] \\ \nonumber 
&\geq \kl [ \enc   || q_\phi(\vz)) ] =  \iq 
\end{align}
This decomposition lends insight into the orthogonal goals of the ELBO regularization term.  The mutual information $\iq$ encourages lossy compression of the data into a latent code, while the marginal divergence enforces consistency with the prior.  The non-negativity of the KL divergence implies that each of these terms detracts from our likelihood bound.

Similarly, we observe that $\kl [ \enc   || p(\vz) ]$ gives an upper bound on the mutual information, with a gap of $\kl [ \qz || p(\vz)]$.   From this perspective, a static Gaussian prior can be seen a particular and possibly loose marginal approximation \citep{ alemi2018fixing, gao2018auto, rosca2018distribution}.  The true encoding marginal $\qz$ provides the unique, optimal choice of prior and leads to a tighter bound on the likelihood:
\be\label{eq:rd_bound}
\eq \log \lld & \geq  \eq \log p_\theta(\vx|\vz)  - \iq \label{eq:tight_elbo}
\ee


Our exact expression for the mutual information over an Echo channel provides the first general method to directly optimize this objective.  This corresponds to adaptively setting $\pz$ equal to $\qz$ throughout training, so that Eq.~\ref{eq:tight_elbo} can be seen as bounding the likelihood under the generative model $p(\vx) = \int \qz \dec d{\vx}$.






\subsection{Rate-Distortion Objective}\label{sec:problem}
While the VAE is motivated as performing amortized inference of the latent variables in a generative model, the prior is rarely leveraged to encode domain-specific structure.  Further, we have shown that enforcing prior consistency can detract from likelihood bounds.

We instead follow \citet{alemi2018fixing} in advocating that representation learning be motivated from an encoding perspective using rate-distortion theory.  In particular, \added{we choose reconstruction under the generative model as the distortion measure $d(\vx, \vz) = - \log \dec$}, and study the following optimization problem:
\be
\max_{\theta, \phi} \mathbb E_{q_\phi} \log p_\theta(\vx|\vz)- \beta \iq \label{eq:problemdef}
\ee


While this resembles the $\beta$-VAE objective of \citet{higgins2017}, we highlight two notable distinctions.  First, treating $\iq$ rather than the upper bound $\kl[\enc||\pz]$ avoids the need to specify a prior and facilitates a direct interpretation in terms of lossy compression.  Further, the $\beta$ parameter is naturally interpreted as a Lagrange multiplier enforcing a constraint on $\iq$.  The special choice of $\beta=1$ gives a bound on log-likelihood according to Eq.~\ref{eq:rd_bound}, which we use to compare results across methods in Sec.~\ref{sec:results}.  We direct the reader to App.~\ref{app:rd} for a more formal treatment of rate-distortion.

\section{Related Work}\label{sec:related}
\textbf{Rate-Distortion Theory:  } 
A number of recent works have made connections between the Evidence Lower Bound objective and rate-distortion theory \citep{informationdropout, alemi2018fixing, infolowerbounds, rezende2018taming}, with \replaced{the average distortion corresponding to the cross entropy reconstruction loss as above.}{reconstruction of the input providing the distortion measure  $d(\vx, \vz) = - \log \dec$}.  In particular, \citet{alemi2018fixing} consider the following upper and lower bounds on the mutual information $\iq$:
\begin{align*}
H - D = H_{q}(X) +  \eq \log \dec \leq \iq \leq \kl [ \enc || r(\vz) ] = R 
\end{align*}
With the data entropy as a constant, minimizing the cross entropy distortion corresponds to the variational information maximization lower bound of \citet{barber2003algorithm}.  The upper bound matches the decomposition in Eq. \ref{eq:mi_ub2} for the generalized choice of marginal $r(\vz)$.  Several recent works have also considered `learned priors' or flow-based density estimators \cite{alemi2018fixing, chen2016variational, tomczak2017vae} that seek to reduce the marginal divergence by approximating $\qz$ (see below).  Using this upper bound on the rate term, \citet{alemi2018fixing} and \citet{rezende2018taming} obtain objectives similar to Eq.~\ref{eq:problemdef}.

Existing models are usually trained with a static $\beta$ \cite{alemi2018fixing, higgins2017} or a heuristic annealing schedule \cite{bowman2016, burgess2018understanding}, which implicitly correspond to constant constraints (see App.\ref{app:rd}).  However, setting target values for either the rate or distortion remains an interesting direction for future discussion. \citet{rezende2018taming} view the distortion as an intuitive quantity to specify in practice, while \citet{zhao2018lagrangian} train a separate model to provide constraint values.  As both works show, specifying a constant and optimizing the Lagrange multiplier $\beta$ with gradient descent can lead to improved performance.

\textbf{Mutual Information in Unsupervised Learning:  } \label{sec:compare}
A number of recent works have argued that the maximum likelihood objective may be insufficient to guarantee useful representations of data \cite{alemi2018fixing, infovae}.  In particular, when paired with powerful decoders that can match the data distribution, VAEs may learn to completely ignore the latent code \cite{bowman2016,chen2016variational}.  

To rectify these issues, a commonly proposed solution has been to add terms to the objective function that maximize, minimize, or constrain the mutual information between data and representation \citep{alemi2018fixing, braithwaite2018bounded, phuong2018mutual, infovae, zhao2018lagrangian}.  However, justifications for these approaches have varied and numerous methods have been employed for estimating the mutual information.   These include sampling \citep{phuong2018mutual}, indirect optimization via other divergences \citep{infovae}, mixture entropy estimation \citep{kolchinsky2017nonlinear}, learned mixtures \citep{tomczak2017vae}, autoregressive density estimation \citep{alemi2018fixing}, and a dual form of the KL divergence \citep{belghazi2018mine}. \added{ \citet{poole2019variational} provide a thorough review and analysis of variational upper and lower bounds on mutual information, although recent results have shown limits on our ability to construct high confidence estimators directly from samples \cite{mcallester2018formal}.   Echo notably avoids this limitation by providing an analytic expression for the rate whenever the representation is sampled according to Eq. \ref{eq:noise}.} 
\replaced{Among the approaches above,}{Among these approaches}, the InfoVAE model of \citet{infovae} provides a potentially interesting comparison with our method.  The objective adds a parameter $\lambda$ to more heavily regularize the marginal divergence and a parameter $\alpha$ to control mutual information.  However, since $\kl[\qz||\pz]$ is intractable, the Maximum Mean Discrepancy (MMD) \cite{gretton2012kernel} between the encoding outputs and a standard Gaussian is used as a proxy.  For the choice of $\lambda = 1000$ (as in the original paper) and $\alpha = 0$ (no information preference), the objective simplifies to:
\benn
\mathcal{L}_{\mathrm{InfoVAE}} & = \mathcal{L}_{ELBO} - 999 \, D_{\mathrm{MMD}} [ \qz || \pz]
\eenn



The sizeable MMD penalty encourages $ \qz \approx \pz$, so that $\kl [\enc || \pz] \approx \kl [\enc || \qz] = \iq$.  Thus, the KL divergence term in the ELBO should more closely reflect a mutual information regularizer, facilitating comparison with the rate in Echo models.   



Flow models, which evaluate densities on simple distributions such as Gaussians but apply complex transformations with tractable Jacobians, are another prominent recent development in unsupervised learning \citep{germain2015made, kingma2016improved, papamakarios2017masked, rezende2015variational}.  Flows can be used both as an encoding mechanism and marginal approximation for our purposes.  In particular, Inverse Autoregressive Flow \cite{kingma2016improved} can be seen as transforming the output of a Gaussian noise channel into an approximate posterior sample using a stack of autoregressive networks.  Masked Autoregressive Flow \cite{papamakarios2017masked} models a similar transformation with computational tradeoffs suited for density estimation, mapping latent samples to high probability under a Gaussian base distribution to approximate $\qz$.




Finally, the VampPrior \cite{tomczak2017vae} may also be used as a marginal approximation, modeling $\qz$ using a mixture distribution $\frac{1}{K} \sum_{k} q_{\phi}(\vz|\mathbf{u}_k)$ evaluated on a set of `pseudo-inputs'  $\mathbf{u}_k \in \mathbb{R}^{d_x}$ learned by backpropagation.
\section{Results}\label{sec:results}
In this section, we would ideally like to quantify the impact of three key elements of the Echo approach: a data-driven noise model, exact rate regularization throughout training, and a flexible marginal distribution.  In App.~\ref{app:marginals}, we observe that the dimension-wise marginals learned by Echo appear Gaussian despite our lack of explicit constraints.  However, the \textit{joint} marginal over $\qz$ (or equivalently $q(\epsilon)$) may still have a complex dependence structure, which is not penalized for deviating from independence or Gaussianity.  We calculate a second-order approximation of total correlation in App.~\ref{app:tc} to confirm that this noise is indeed dependent across dimensions.  

\subsection{ELBO Results}\label{sec:elboresults}
\begin{table}[t]
\centering
\caption{Test Log Likelihood Bounds}
\label{tab:elbos}
\small
\begin{tabular}{lcccc}
 & \multicolumn{4}{c}{Binary MNIST} \\ \cmidrule(r){2-5}
Method & Rate & Dist & -ELBO & $\sigma$ \\ \midrule
Echo & 26.4 & 62.4 & \textbf{88.8} & \replaced{.18}{ .31} \\ \midrule  
VAE & 26.2 & 63.6 & 89.8 & .18 \\ 
InfoVAE & 26.0 & 64.0 & 90.0 & .14  \\ 
VAE-MAF & 26.1 & 63.7 & 89.8  & .15 \\  
VAE-Vamp & 26.3 & 63.0 & 89.3 & .19 \\\midrule  
IAF-Prior & 26.5 & 63.5 & 90.0 & .13 \\     
IAF+MMD & 26.3 & 63.6 & 90.1 & \replaced{.15}{.22}   \\ 
IAF-MAF & 26.4 & 63.6 & 89.9 & .18  \\ %
IAF-Vamp & 26.4 & 62.8 & 89.2 & .18  \\ 
\bottomrule
\end{tabular}\hspace*{-0.1cm}%
\begin{tabular}{cccc} 
 \multicolumn{4}{c}{Omniglot} \\ \cmidrule(r){1-4}
 Rate & Dist &  -ELBO & $\sigma$  \\ \midrule
 30.2 & 84.4 & \textbf{114.6} & \replaced{.30}{ .43}\\ \midrule 
 30.5 & 86.5 & 117.0 & .44 \\ 
30.3 & 87.3 & 117.6 & .51 \\  
30.5 & 86.4 & 116.9 & .31  \\ 
30.8 & 84.3 & 115.1 & .28 \\ \midrule 
 30.5 & 86.7 & 117.2 & \replaced{.36}{ .23} \\ 
 30.7 & 86.4 & 117.1 & \replaced{.28}{ .15} \\ 
 30.6 & 86.5 & 117.1 & .24 \\ 
 30.4 & 85.0 & 115.4 & .20 \\ 
\bottomrule 
\end{tabular}\hspace*{-0.1cm}
\begin{tabular}{cccc}
\multicolumn{4}{c}{Fashion MNIST} \\ \cmidrule(r){1-4}
Rate &  Dist & -ELBO & $\sigma$ \\ \midrule
16.6 & 218.3 & 234.9 & .21 \\ \midrule 
15.7 & 219.3 & 235.0 & .10  \\ 
15.6 & 219.5 & 235.1 & .10 \\ 
15.7 & 219.3 & 234.9  & .14 \\ 
 15.9 & 218.5 & 234.4 & .08 \\ \midrule 
15.8 & 219.1 & 234.9 & .10  \\ 
15.7 & 219.2 & 234.9 & .13 \\ 
15.8 & 219.1 & 234.9 & .14 \\ 
16.0 & 218.3 & \textbf{234.3} & .16 \\  
\bottomrule
\end{tabular}%
\begin{tabular}{c} 
Params \\ \cmidrule(r){1-1}
($\cdot 10^6$) \\ \midrule
1.40 \\  \midrule 
1.40 \\ 
1.40 \\
3.12 \\
1.99 \\ \midrule
3.12 \\ 
3.12 \\ 
4.84 \\
3.71 \\  
\bottomrule
\end{tabular}
\end{table}
We proceed to analyse the log-likelihood performance of relevant models on three image datasets: static Binary MNIST \citep{salakhutdinov2008quantitative}, Omniglot \citep{lake2015human} as adapted by \citet{burda2015importance}, and Fashion MNIST (fMNIST) \citep{xiao2017}.   All models are trained with 32 latent variables using the same convolutional architecture as in \citet{alemi2018fixing} except with ReLU activations.  We trained using Adam optimization for 200 epochs, with an initial learning rate of 0.0003 decaying linearly to 0 over the last 100 epochs.  \deleted{We also found it necessary to use a smaller bandwidth than in \citet{infovae} to enforce prior consistency with the MMD penalty. See App.\ref{app:experiments} for additional details.}

 Table \ref{tab:elbos} shows negative test ELBO values, with the rate column reported as the appropriate upper bound for comparison methods. \added{Results are averaged from ten runs of each model after removing the highest and lowest outliers.}
 We compare Echo against diagonal Gaussian noise and IAF encoders, each with four marginal approximations: a Gaussian prior with and without the MMD penalty (e.g. \textit{IAF-Prior}, \textit{IAF+MMD}), MAF \cite{papamakarios2017masked}, and VampPrior \cite{tomczak2017vae}.  Note that \textit{VAE} is still used to denote the Gaussian encoder when paired with a different marginal (e.g. \textit{VAE-Vamp}).

We find that the Echo noise autoencoder obtains improved likelihood bounds on Binary MNIST and Omniglot, with competitive results on fMNIST.  We emphasize that Echo \replaced{achieves this performance}{ approach provides these gains} with significantly fewer parameters than comparison methods.  IAF and MAF each require training an additional autoregressive model with size similar to the original network, while the VampPrior uses 750 learned pseudoinputs of the same dimension as the data.  Although Echo involves \replaced{special}{ extra} computation to construct the noise for each training example, it has the same number of parameters as a standard VAE and runs in approximately the same wall clock time.

\replaced{We observe only minor differences based on the choice of encoding mechanism, which is somewhat surprising given the additional expressivity of the IAF transformation.  The benefit of the flow transformations may be more readily observed on more difficult datasets or with more advanced architecture tuning \cite{kingma2016improved}.   

We do find that a more complex marginal approximation can help performance.  Although we see minimal gains from the MMD penalty and MAF marginal, the VampPrior bridges much of the performance gap with Echo noise.  Recall that a learned prior can help ensure a tight rate bound while providing flexibility to learn a more complex marginal (in this case, a mixture model).   However, the relative contribution of these effects is difficult to decouple.   Echo instead provides both an exact rate and an adaptive prior by directly linking the choice of encoder and marginal. }{ \\ \\ The Echo encoder is evidently more flexible than a Gaussian noise channel, and Echo dominates VAEs across datasets with up to six nats of improvement.  The additional expressivity of the IAF transformation is necessary to approach Echo performance, although IAF is not directly interpretable as a noise model and may still be paired with an inexact marginal. }

\replaced{}{We can gain insight into the benefit of exact rate regularization for Gaussian priors using the MMD penalty to enforce $q(\vz) \approx \mathcal{N}(0, \mathbb{I})$.  This leads to slightly improved performance for InfoVAE, although IAF encoders do not appear to be sensitive to this change.  It may be surprising that additional regularization can lead to better ELBOs, but we argue MMD can guide optimization toward solutions that make efficient use of the rate penalty $\kl [\enc||\pz]$.  The data processing inequality \cite{cover} states that only those nats of mutual information can be translated into mutual information between the data and model reconstructions, $I(X; \hat{X})$.  While this should already encourage encoders to match a given marginal, the MMD penalty appears useful in enforcing this condition throughout training for VAEs.}

 \deleted{ A learned approximation such as MAF or VampPrior could also help ensure a tight rate bound while adding flexibility to learn a complex marginal space.  However, these effects can be difficult to distinguish. Gaussian encoders appear unable to leverage the complexity of the MAF transformation, with prior consistency proving particularly useful on Omniglot.  VampPrior is useful for both encoders and \textit{IAF+Vamp} provides competitive performance across datasets, although we cannot easily quantify the gap in the marginal approximation. Alternatively, Echo provides both an exact rate and an adaptive prior by directly linking the choice of encoder and marginal.}  



\subsection{Rate Distortion Curves}\label{sec:rdresults}

\begin{figure*}[t]
\centering
\begin{minipage}{.5\textwidth}
  \centering 
    \includegraphics[width=\textwidth]{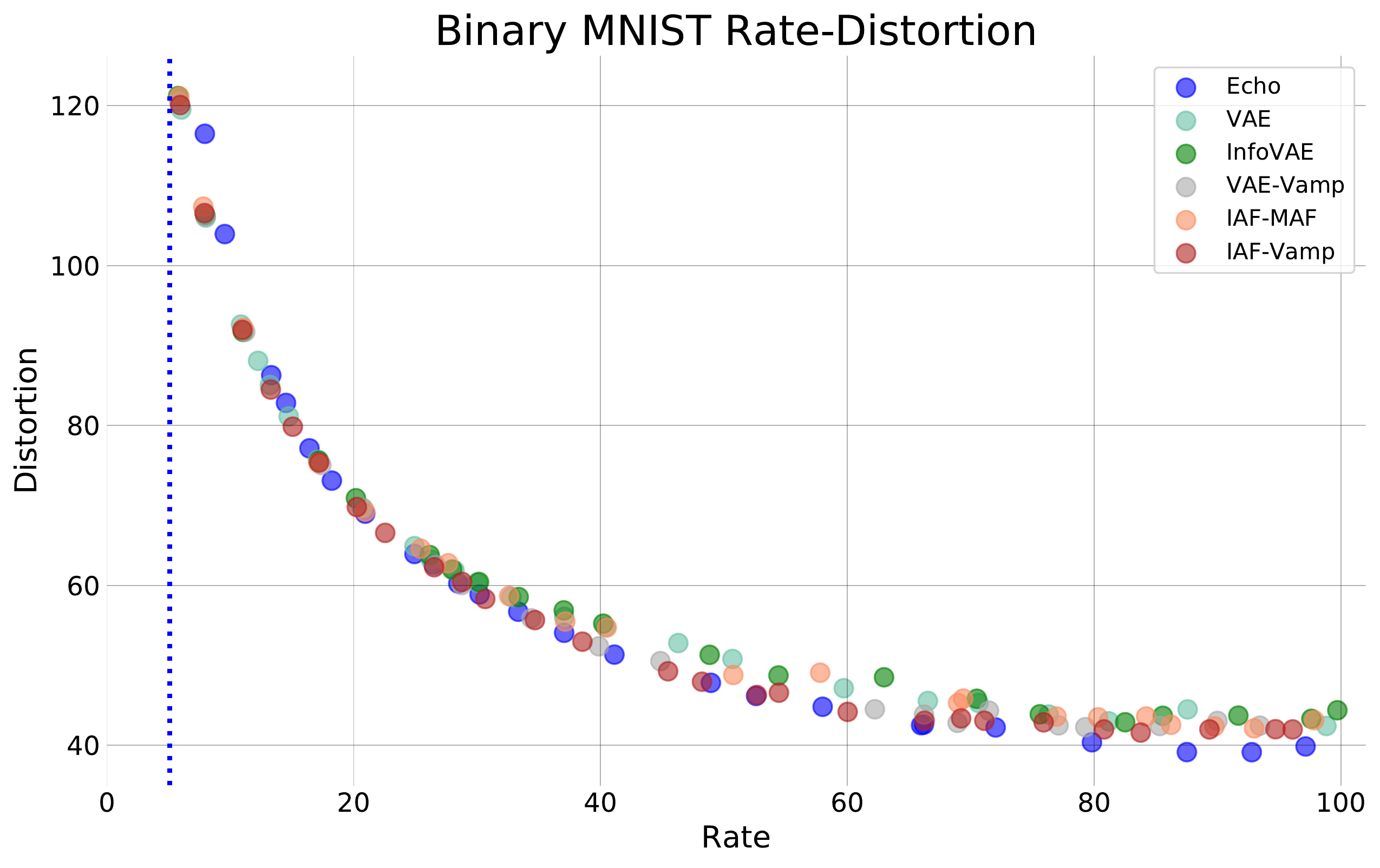}
    \includegraphics[trim=25 25 15 25, clip, width=\textwidth]{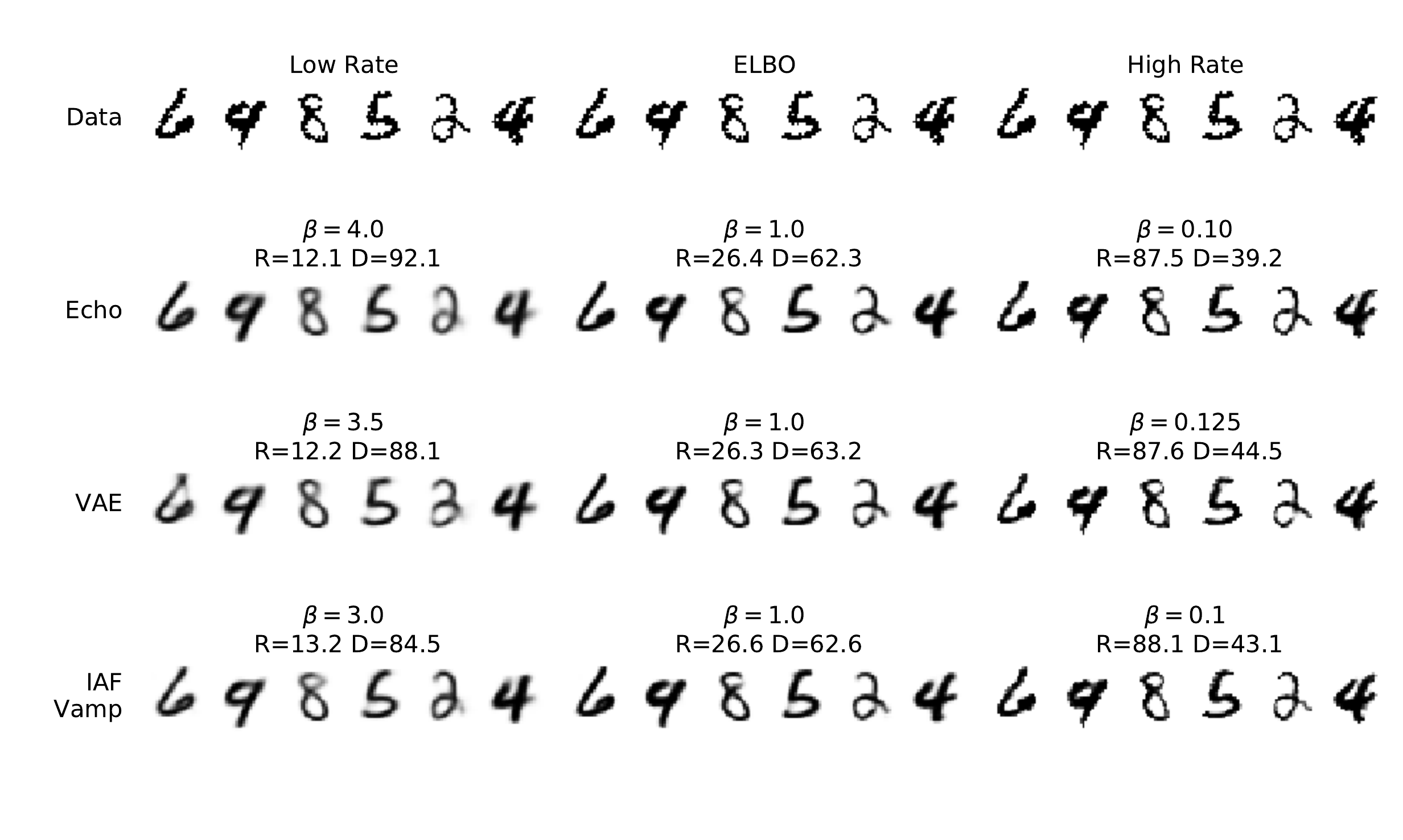}

\captionof{figure}{Binary MNIST R-D and Visualization} \label{fig:bm_rd}
\end{minipage}%
\begin{minipage}{.5\textwidth}
  \includegraphics[width=\textwidth]{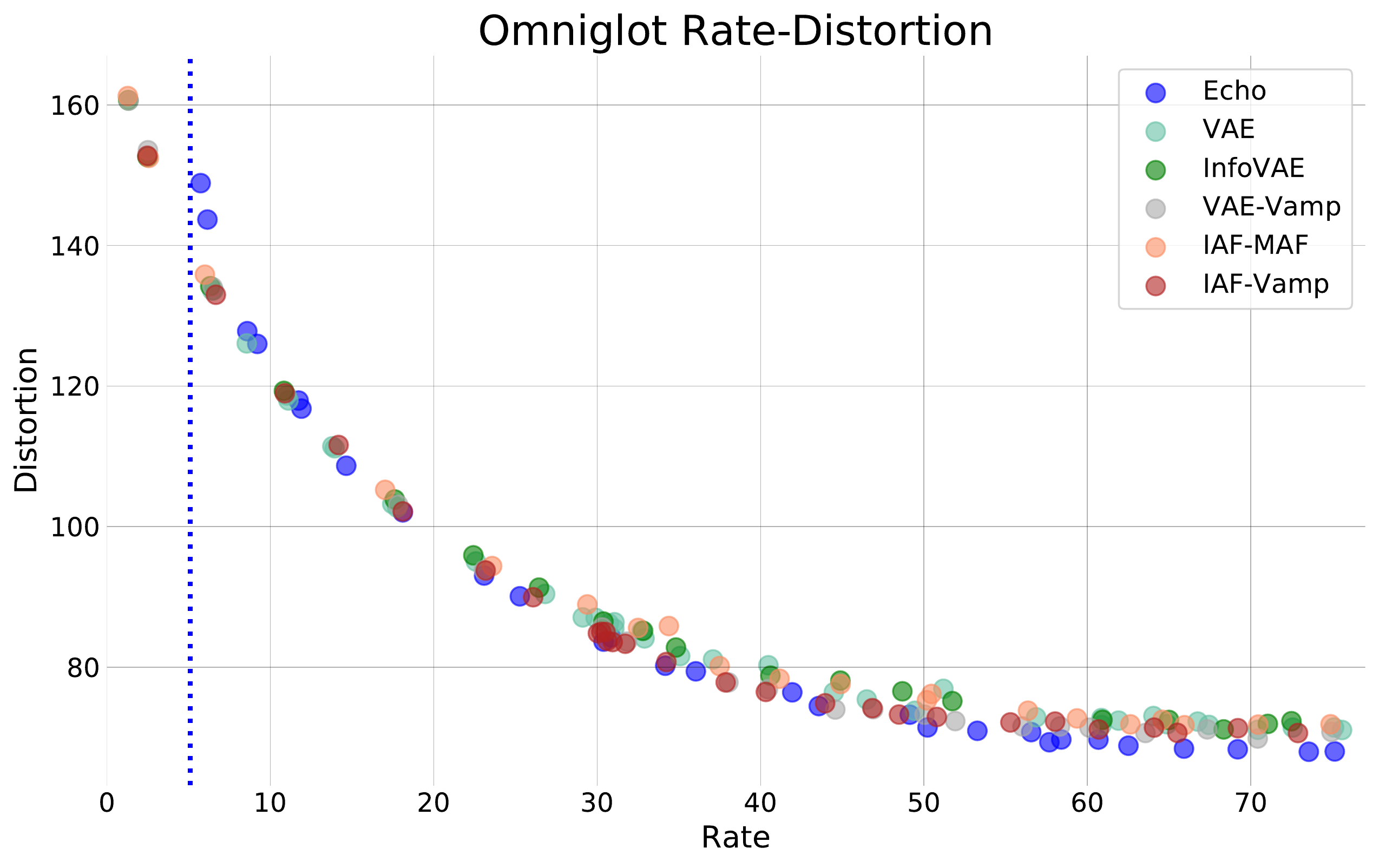}
  \vspace{-.005cm}
  \includegraphics[trim=10 25 15 15, clip, width=\textwidth]{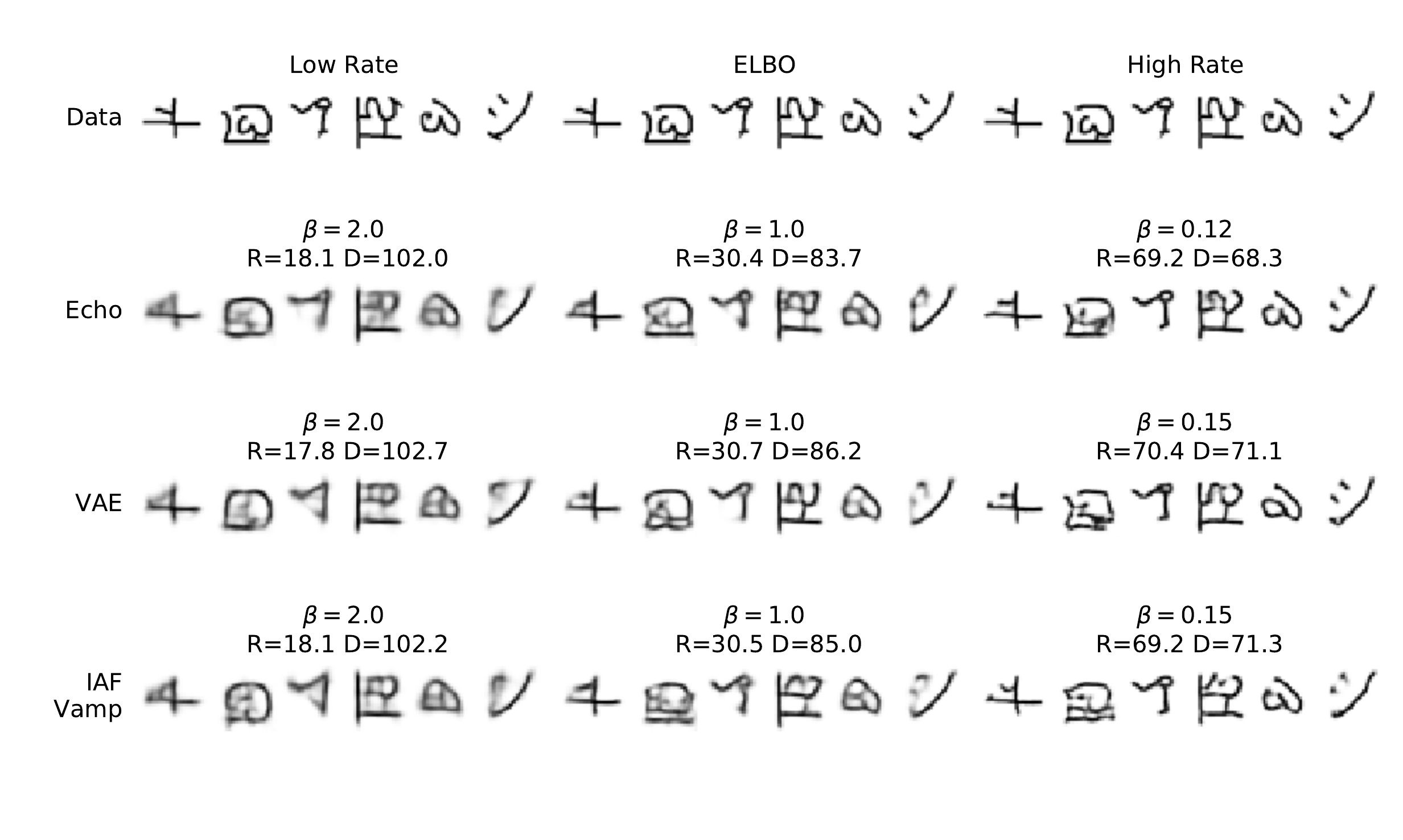}
    \captionof{figure}{Omniglot R-D and Visualization}
   \label{fig:omni_rd}
\end{minipage}
\end{figure*}

Moving beyond the special case of $\beta = 1$, rate-distortion theory provides the practitioner with an entire space of compression-relevance tradeoffs corresponding to constraints on the rate.  We plot R-D curves for Binary MNIST in Fig.~\ref{fig:bm_rd},  Omniglot in Fig.~\ref{fig:omni_rd}, and Fashion MNIST in App.~\ref{app:fm_rd}.  We also show model reconstructions at several points along the curve, with the output averaged over 10 encoding samples to observe how stochasticity in the latent space is translated through the decoder.  These visualizations are organized to compare models with similar rates, which we emphasize may occur at different values of $\beta$ for different methods depending on the shape of their respective curves.   

\replaced{The Echo rate-distortion curve indeed exhibits several notable differences with comparison methods.  We first note that Echo performance begins to drop off as we approach the lower limit on achievable rate, which is shown with a dashed vertical line and ensures that the rate calculation accurately reflects the noise for a finite number of samples (see App.\ref{app:implementation}).  In this regime, the sigmoids parameterizing $s_j(\vx)$ are saturated for much of training, and unused dimensions still count against the objective since we cannot achieve zero rate.   We reiterate that this low rate limit may be adjusted by considering more terms in the infinite sum or decreasing the number of latent factors. }{\\ \\ Echo continues to outperform comparison methods across most of the rate-distortion curve.  However, we note that performance begins to drop off as we approach the lower limit on achievable rate, shown with a dashed vertical line in each plot and described in detail in App.\ref{app:implementation}.  This bound arises from ensuring that the rate calculation accurately reflects the noise for a finite number of samples, and is described in detail in App.\ref{app:implementation}.  In this regime, the sigmoids parameterizing $s_j(\vx)$ are saturated for much of training, with unused dimensions still counting against the rate.   We reiterate that this low rate limit may be adjusted by considering more terms in the infinite sum or decreasing the number of latent factors.}

At low rates, our models maintain only high level features of the input image, and the blurred average reconstructions reflect that different samples can lead to semantically different generations.  \deleted{For example, Echo produces several ways of rounding the leftmost digits on Binary MNIST.}  On both datasets, Echo gives qualitatively different output variation than Gaussian noise at low rate and similar distortion.  Intermediate-rate models still reflect some of this sample diversity, particularly on the more difficult Omniglot dataset.

For very high capacity models, we observe that Echo slightly extends its gains on both datasets, with three to five nats lower distortion than comparison methods at the same rates.   Intuitively, a more complex encoding marginal may be harder to match to a (learned) prior, loosening the upper bound on mutual information.  The Echo approach can be particularly useful in this regime, as it avoids explicitly constructing the marginal while still providing exact rate regularization.

\begin{figure}[t] 
\begin{minipage}{.7\linewidth}
\centering
\captionof{table}{Disentanglement Scores} 
\label{tab:disent2}
\vspace{.1cm}
\begin{tabular}{lcccccccccccccc}
 & \multicolumn{6}{c}{\small Independent Ground Truth} & \multicolumn{6}{c}{\small Dependent Ground Truth} \\ \midrule
  & \multicolumn{2}{c}{\small Factor} & & \multicolumn{2}{c}{\small MIG} & & & \multicolumn{2}{c}{\small Factor } & &  \multicolumn{2}{c}{MIG}\\ \cmidrule(r){2-3} \cmidrule(r){5-6} \cmidrule(r){9-10} \cmidrule(r){12-13}
  & \small Echo & \small VAE & & \small Echo & \small VAE & & & \small Echo  & \small VAE & & \small Echo & \small VAE  \\ \midrule
\small $\beta = 1$ & \textbf{0.83} & 0.65  & &  0.16 & 0.07 & & & \textbf{0.70 }& 0.60  & & 0.11 & 0.08 \\
\small $\beta = 4$ & 0.78 & 0.65  & & 0.18 & 0.10 & & & 0.67 & 0.60 & & 0.11  & 0.07 \\ 
\small $\beta = 8$ & 0.75 & 0.69 & &  0.18  & 0.13 & & &  0.56 & 0.56 & & 0.06 & 0.06 \\\midrule
\small $\gamma=0$  & 0.83 & 0.65 & &  0.16  & 0.07 & & &  0.70 & 0.60 & &  0.10 & 0.08 \\ 
\small $\gamma=20$ &  0.78 & 0.72 & & \textbf{0.30} & 0.17 & & & 0.65 &  0.60 & &  \textbf{0.16} & 0.07 \\
\small $\gamma=50$&  0.79 & 0.73  & &  \textbf{0.30}  & 0.18 & & & 0.58 & 0.53 & & \textbf{0.16} & 0.07 \\ 
\small $\gamma=100$&  0.77 & 0.70 & & 0.29 & 0.18 & & & 0.49 &  0.53 & & 0.09   & 0.08 \\ 
 \bottomrule
\end{tabular}%
\end{minipage}%
\begin{minipage}{.3\linewidth} 
    \vspace{+1.4\baselineskip}
    \captionof{figure}{\protect\centering Echo \small $\beta = 0, \gamma = 1$} \label{fig:disent_plot} 
    \includegraphics[trim=10 0 20 40, clip, width=\textwidth,height=5.25cm]{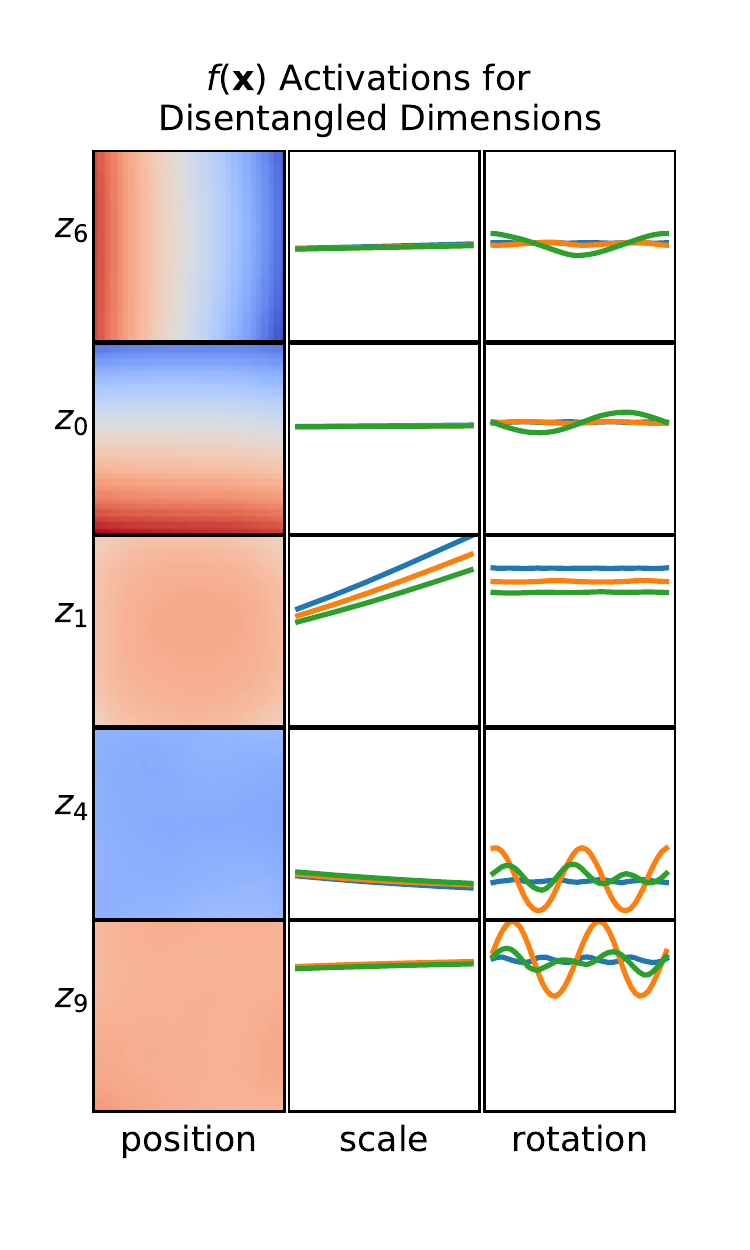}
  \end{minipage}%
\end{figure}

\subsection{Disentangled Representations} \label{sec:disentanglement}
Significant recent attention has been devoted to learning \textit{disentangled} representations of data, which reflect the true generative factors of variation in the data \cite{chen2018isolating, mathieu2019disentangling} and may be useful for downstream tasks \cite{locatello2018challenging, van2019disentangled}.   While prevailing definitions and metrics for disentanglement have recently been challenged \cite{locatello2018challenging}, existing methods often rely on the inductive bias of independent ground truth factors, either via total correlation (TC) regularization \cite{chen2018isolating,kim2018disentangling}, or by using higher  $\beta$  to more strongly penalize the KL divergence to an independent prior \cite{burgess2018understanding, higgins2017}.  Since Echo does not assume a factorized encoder or marginal, we \replaced{investigate whether it can better preserve disentanglement when the ground truth factors are not independent. }{ hypothesize that it may be more flexible in preserving disentanglement when the ground truth factors are not independent.}  

To evaluate the quality of Echo noise representations, we compare against VAE models with diagonal Gaussian noise and priors, and consider the effects of increasing $\beta$ or adding independence regularization with parameter $\gamma$  \cite{chen2018isolating, kim2018disentangling}:
\begin{align*}
    \mathcal{L} = \mathbb{E}_{q} \log \dec - \beta \iq - \gamma \, TC(Z)
\end{align*}
TC regularization is implemented as in \cite{kim2018disentangling}, where a discriminator is trained to distinguish samples from $q(\vz)$ and $\prod q(z_j)$.  We keep $\beta = 1$ when modifying $\gamma$.  Note that enforcing marginal independence will also limit the dependence in the noise learned by Echo, since $TC(Z|X)$ and $TC(Z)$ are linked as described in Sec. \ref{sec:echo_properties}.

We calculate disentanglement scores on the dSprites dataset \cite{dsprites17}, where the ground truth factors of shape, scale, x-y position, and rotation are known and sampled independently across the dataset.  To induce dependence in the ground truth factors, we downsample the dataset by partitioning each factor into 4 bins and randomly excluding pairwise combinations of bins with probability 0.15.  This leads to a dataset of 15\% of the original size, with a total correlation of 1.49 nats in the generative factors.  We use both the implementation and experimental setup of \citet{locatello2018challenging} and average scores over ten runs of each method.

Table \ref{tab:disent2} reports FactorVAE \cite{kim2018disentangling} and Mutual Information Gap \cite{chen2018isolating} scores for both independent and dependent ground truth factors.  We find that Echo provides superior disentanglement scores to VAEs across the board, although the relative improvement does not increase in the case of dependent latent factors.   On the full dataset, independence regularization improves the MIG score for Echo and both scores for VAE, but may guide both models toward more entangled representations when this inductive bias does not match the ground truth.  Finally, we note that increasing $\beta$ need not improve disentanglement for Echo noise, since we have relaxed assumptions of independence in both the encoder and marginal.  Higher $\beta$ actually appear to hurt disentanglement scores on the dependent dataset for both methods.

In Figure \ref{fig:disent_plot}, we visualize an Echo model that has successfully learned to disentangle position and scale, but not rotation, on the full dSprites dataset.  Each row represents a single latent dimension, and each column shows mean $f(\vx)$ values as a function of the respective ground truth factors.  Note that the first column shows a heatmap in the x-y plane, while the orange, blue, and green lines indicate ellipse, square, and heart, respectively (see \cite{chen2018isolating}).  In general, we observed that Echo models achieved their highest MIG scores on position, scale, and shape, with rotation often entangled across two or more dimensions.

\section{Conclusion}\label{sec:conclusion}

VAEs can be interpreted as performing a rate-distortion optimization, but may be handicapped by their weak compression mechanism, independent Gaussian marginal assumptions, and upper bound on rate.  We introduced a new type of channel, Echo noise, that provides a more flexible, data-driven approach to constructing noise and admits an exact expression for mutual information.  Our results demonstrate that using Echo noise in autoencoders can lead to better bounds on log-likelihood, favorable trade-offs between compression and reconstruction, and \added{more disentangled representations.}

The Echo channel can be substituted for Gaussian noise \replaced{in most scenarios where VAEs are used}{}, with similar runtime and the same number of parameters.  Echo should also translate to other rate-distortion problems via the choice of distortion measure, including supervised learning with the traditional Information Bottleneck method \cite{alemi2016deep, tishby2000information} and \added{invariant representation learning as in \cite{ moyer2018invariant}}. Exploring further settings where  mutual information provides meaningful regularization for neural network representations remains an exciting avenue for future work.

\bibliographystyle{plainnat}
\bibliography{sections/gversteeg_bibdesk}
\clearpage
\appendix
\newpage



\section{Rate-Distortion Theory}\label{app:rd}
Given a source $X \sim \qd$ and a distortion function $d : \mathcal X \times \mathcal Z \mapsto \mathbb{R}^{+}$ over samples and their codes $Z$, the rate-distortion function is defined as an optimization over conditional distributions $q(\vz|\vx)$:
\begin{align}
R(D) & = \min_{q(\vz|\vx)} I_{q}(X;Z) \quad \text{ subj  }\mathbb{E}_{\qd q(\vz|\vx)} d(\vx, \vz) \leq D \label{eq:rd}
\end{align}
It is common to optimize an unconstrained problem by introducing a Lagrange multiplier $\beta^{-1}$ which, at optimality, reflects the tradeoff between compression and fidelity as the slope of the rate-distortion function at $D$, i.e. $\beta^{-1} = -\frac{\partial R}{\partial D}$: \footnote{Note, we have constrained the distortion here, instead of the rate as in the main text.  We write the Lagrange multiplier as $\beta^{-1}$ to maintain a correspondence between the parameterizations of each problem.}
\be
\mathcal L = \max_{\beta} \min_{q(\vz|\vx)} I_{q}(X;Z) + \beta^{-1} \, \big( \mathbb{E}_{\qd q(\vz|\vx)} d(\vx, \vz) - D \big) \nonumber 
\ee

Eq. \ref{eq:problemdef} suggests the cross entropy reconstruction loss as a distortion measure, so that $d(\vx, \vz) = - \log \dec$.  We can then observe the equivalence between the rate-distortion optimization and our problem definition, as only the tradeoff between rate and distortion affects the characterization of solutions.  

It is also interesting to note the self-consistent equations which solve the variational problem above (see, e.g.  \citet{tishby2000information})
\be
q(\vz|\vx) &= \frac{q(\vz)}{Z(\vx, \beta)} \exp \big(-\beta^{-1} \, d(\vx, \vz) \big) \label{eq:rdsolution} \\
q(\vz) &= \int q(\vz|\vx) q(\vx) d{\vx} \nonumber
\ee
Notice that, regardless of the choice of distortion measure, our Echo noise channel enforces the second equation \textit{throughout} optimization by using the  encoding marginal as the `optimal prior.'  For our choice of distortion, the solution simplifies as:  %
\be
q(\vz|\vx) &= \frac{q(\vz)p(\vx|\vz)^{1/\beta}}{Z(\vx, \beta)} \label{eq:rd_unsupervised}
\ee
This provides an interesting comparison with the generative modeling approach.  While the Evidence Lower Bound objective can be interpreted as performing posterior inference with prior $p(\vz)$ in the numerator, we see that the information theoretic perspective prescribes using the exact encoding marginal $q(\vz)$.  Indeed, our version of the ELBO bounds in Eq. \ref{eq:tight_elbo} bounds the likelihood under the generative model $p(\vx) = \int \qz \dec d{\vx}$.  The gap in this bound then becomes $\kl [\enc || \frac{q(\vz)p(\vx|\vz)}{Z(\vx)} ]$, encouraging the encoder to match the rate-distortion solution for $\beta = 1$.

\section{Implementation of Echo Noise Sampling}\label{app:implementation}
Numerically, Gaussian noise cannot be sampled exactly and is instead approximated to within machine precision.  We discuss several unique implementation choices that allow us to generate similarly precise Echo noise samples.  In particular, we must ensure that the infinite sum defining the noise in Eq.\ref{eq:noise} converges and is accurately approximated using a finite number of terms. 
%

\textbf{Activation Functions: }  We parameterize the encoding functions $f(\vx)$ and $S(\vx)$ using a neural network and can choose our activation functions to satisfy the convergence conditions of Lemma \ref{condition}.  We let the final layer of $f$ use an element-wise $\tanh{(\cdot / 16)}$ to guarantee that the magnitude is bounded: $\forall \vx , \, |f(\vx)| \leq 1$.  We found it useful to expand the linear range of the $\tanh$ function for training stability, although differences were relatively minor and may vary by application. One could also consider clipping the range of a linear activation to enforce a desired magnitude $|f(\vx)| \leq M$.

For the experiments in this paper, $S(\vx)$ is diagonal, with functions $s_j(\vx)$ on the diagonal.  We implement each $s_j(\vx)$ using a sigmoid activation, making the spectral radius $\rho(S(\vx)) = \max_j |s_j(\vx)| \leq 1$.  However, this is not quite enough to ensure convergence, as $\forall \vx, s_j(\vx) = 1$ would lead to an infinite amount of noise.  We thus introduce a clipping factor on $s_j(\vx)$ to further limit the spectral radius and ensure accurate sampling in this high noise, low rate regime.

\textbf{Sampling Precision: } When can our infinite sum be truncated without sacrificing numerical precision?  We consider the sum of the remainder terms after truncating at $\ell = \dmax$ using geometric series identities.  For $|f(\vx)| \leq M$ and $\rho(S(\vx)) \leq \rad$, we know that the sum of the infinite series will be less than $\frac{M}{1-\rad}$ . The first $\dmax$ terms will have a sum given by $M \big( \frac{1-\rad^{\dmax}}{1-\rad}\big)$, so the remainder will be less than $M \big( \frac{\rad^{\dmax}}{1-\rad}\big)$.  For a given choice of $\dmax$, we can numerically solve for $\rad$ such that the sum of truncated terms falls within machine precision $M \big( \frac{\rad^{\dmax}}{1-\rad}\big) \leq 2^{-23}$.  For example, with $M = 1$ and $\dmax = 99$, we obtain $\rad = 0.8359.$  We therefore scale our element-wise sigmoid to $s_j(\vx) = \rad \sigma(\cdot)$ for calculating both the noise and the rate.    

\textbf{Low Rate Limit: } \label{sec:clip}
This clipping factor limits the magnitude of noise we can add in practice, and thus defines a lower limit on the achievable rate in an Echo model.  For diagonal $S(\vx)$, the mutual information can be bounded in terms of $\rad$, so that $I(X;Z) = -\sum_{j=1}^{d_z} \mathbb E_q \log |s_j(\vx)| \geq -d_z \log \rad$.  Note that $\rad$ is increasing in $\dmax$, since the first term in the remainder decreases exponentially with the number of terms.  Each included term can then have higher magnitude, leading to lower achievable rates.  Thus, this limit can be tuned to achieve strict compression by increasing $\dmax$ or simply using fewer latent factors $d_z$. 

\textbf{Batch Optimization: } Another consideration in choosing $\dmax$ is that we train using mini-batches of size $B$ for stochastic gradient descent.  For a given training example, we can use the other iid samples in a batch to construct Echo noise, thereby avoiding additional forward passes to evaluate $f$ and $S$.  There is also a choice of whether to sample with or without replacement, although these will be equivalent in the large batch limit. In experiments we saw little difference between these strategies, and proceed to sample without replacement to mirror the treatment of training examples. We let $\dmax = B-1$ to set the rate limit as low as possible for this sampling scheme.

\section{Total Correlation for Echo Noise} \label{app:tcrelation}\label{app:tc}
To briefly demonstrate that Echo noise is dependent across latent dimensions, we can estimate the total correlation of noise samples in Table \ref{tab:tc} using the second-order covariance approximation $TC(\bfeps) = -\log |\Sigma_{\mathrm{diag}_{\epsilon}^{-1}} \Sigma_{\bfeps} |$.  This is clearly zero for diagonal Gaussian noise, and provides a sufficient condition to show that the Echo noise is not independent.

\begin{table}[h]
\centering
\caption{TC by Dataset}
\label{tab:tc}
\begin{tabular}{lccc}
\toprule 
    & Binary MNIST &  Omniglot & Fashion MNIST  \\ \midrule
 $TC(\bfeps)$ & 7.3 & 18.8 & 30.2 \\
\bottomrule
\end{tabular}
\end{table}

For the Echo models considered in this work, we can also derive an interesting equivalence between the conditional and overall total correlation.  Observe that the expression for mutual information in Eq.~\ref{eq:mi} decomposes for diagonal $S(\vx)$:
\begin{align*}
\iq &= - \mathbb{E}_{\qd} \, \log  |\det S(X)| \\
& = - \mathbb{E}_{\qd} \sum \limits_{j = 1}^{d_z}  \log s_j(X)
\end{align*}
This additivity across dimensions implies that $\iq = \sum_{j = 1}^{d_z} I_{q}(X;Z_j)$.  Before proceeding, we first recall the definitions of total correlation and conditional total correlation \cite{watanabe}, which measure the divergence from independence of the marginal and conditional, respectively:
\benn
TC(Z) &= \kl [ \qz || \prod \limits_{j = 1}^{d_z} q_{\phi}(z_j) ] \\
TC(Z|X) &= \kl [ \enc || \prod \limits_{j = 1}^{d_z} q_{\phi}(z_j | \vx) ]
\eenn
Now consider the quantity $\kl [ \enc || \prod \limits_{j = 1}^{d_z} q_{\phi}(z_j)]. $  We can decompose this in two different ways, first by projecting onto the joint marginal:
\benn
\kl [ \enc || \prod_{j = 1}^{d_z} q_{\phi}(z_j)] & = \eq \log \frac{ \enc }{ \prod_{j = 1}^{d_z} q_{\phi}(z_j)} \\
& = \eq \log \frac{ \enc }{ \prod_{j = 1}^{d_z} q_{\phi}(z_j)} \frac{\qz }{\qz } \\
& = \iq + TC(Z)
\eenn
We can also decompose using the factorized conditional:
\benn
\kl [ \enc || \prod_{j = 1}^{d_z} q_{\phi}(z_j)]  &= \eq \log \frac{ \enc }{ \prod_{j = 1}^{d_z} q_{\phi}(z_j)} \\
& = \eq \log \frac{ \enc }{ \prod_{j = 1}^{d_z} q_{\phi}(z_j)} \frac{\prod_{j = 1}^{d_z} q_{\phi}(z_j | \vx) }{\prod_{j = 1}^{d_z} q_{\phi}(z_j | \vx) } \\
& = \sum \limits_{j = 1}^{d_z} I_{q}(X;Z_j) + TC(Z|X)
\eenn
The equality of $I(X;Z)$ and $\sum \limits_{j = 1}^{d_z} I_{q}(X;Z_j)$ implies equality for $TC(Z)$ and $TC(Z|X)$.
\benn
 \iq + TC(Z) &= \sum \limits_{j = 1}^{d_z} I_{q}(X;Z_j) + TC(Z|X) \\
 \implies TC(Z) &= TC(Z|X)
\eenn
The effects of this relationship have not been widely studied, as $TC(Z|X)=0$ for traditional VAE models.  On the other hand,  $TC(Z)$ is usually non-zero and has been minimized as a proxy for `disentanglement' \cite{kim2018disentangling, chen2018isolating}.  We evaluate similar regularization for Echo in Sec. \ref{sec:disentanglement}.  

We have shown that parallel Echo channels are perfectly additive in that $\sum_j I_{q}(X;Z_j) - \iq = 0$.  However, general channels could be sub- or super-additive, so that $TC(Z) < TC(Z|X)$, $TC(Z) = TC(Z|X)$, or $TC(Z) > TC(Z|X)$ (e.g. Sec. 4.2 of \cite{griffith}).  Extending Echo to non-diagonal $S(\vx)$ could allow us to explore the various relationships between $TC(Z)$ and $TC(Z|X)$ and more precisely characterize those which are useful for representation learning.


\section{Additional Results} \label{app:additional}
\subsection{Fashion MNIST Rate-Distortion} \label{app:fm_rd}

We show a full rate-distortion curve for Fashion MNIST in Fig.\ref{fig:fm_rd}, along with reconstructions at various rates.  Echo performance nearly matches that of comparison methods except at low rates.

\begin{minipage}[b]{\columnwidth}
  \centering
    \includegraphics[width=.5\columnwidth]{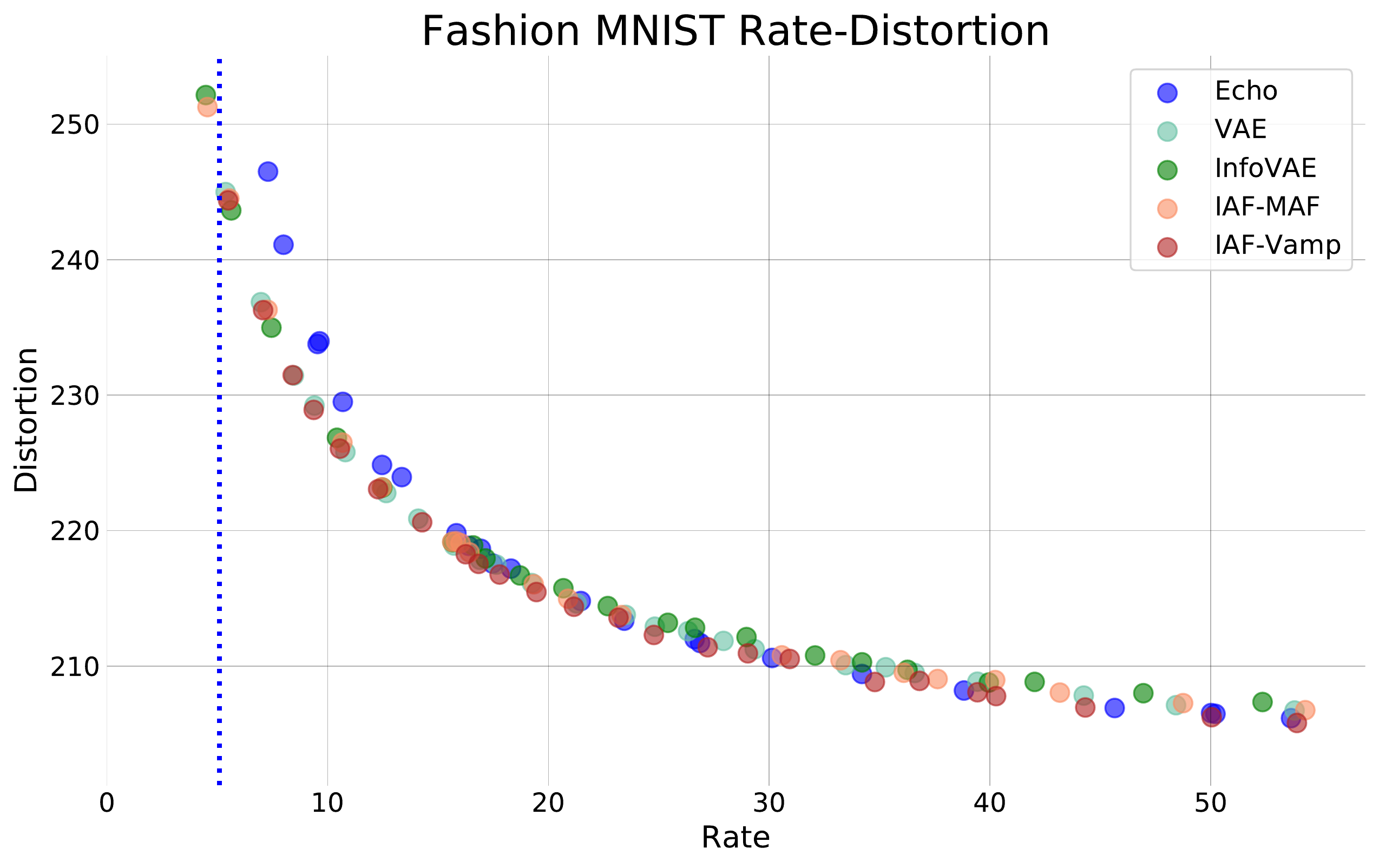}%
    \includegraphics[width=.5\columnwidth]{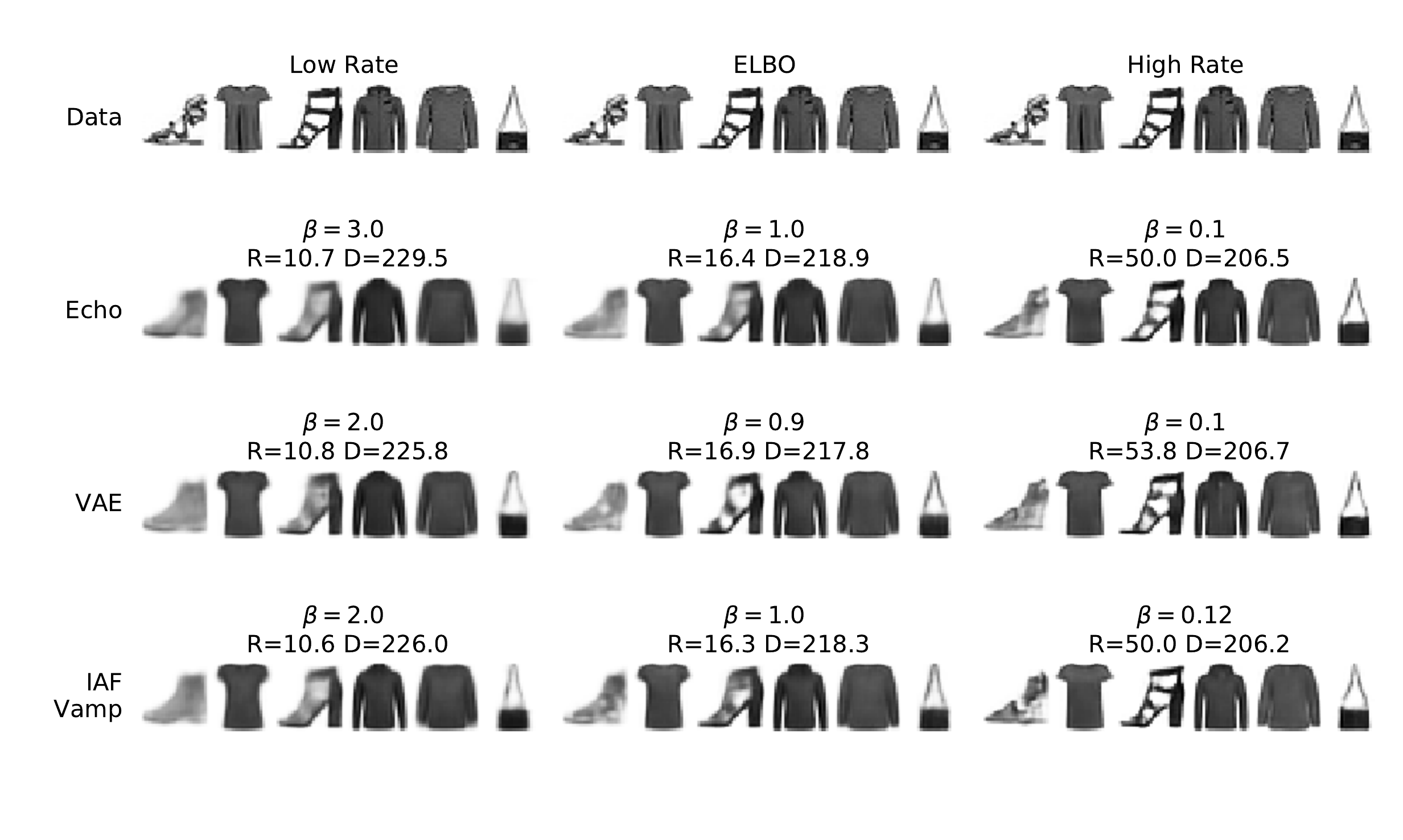}
\captionof{figure}{FMNIST Rate-Distortion and Visualization} \label{fig:fm_rd}
\end{minipage}

\subsection{Marginal Activations}\label{app:marginals}
We visualize dimension-wise marginal activations for Echo on Binary MNIST and Omniglot in Fig.\ref{fig:marginal_act}.  We show $q(z_j)$ for thirteen dimensions in each method, including nine with highest rates, three with low rates, and one with minimal rate.  For each, we combine activations from 2000 encoder samples on each test example and fit a KDE estimator with RBF bandwidth chosen according to the Scott criterion.  

As discussed in Sec.~\ref{sec:echo}, Echo avoids assumptions that the marginals are independent and Gaussian as in VAEs.  However, we observe the individual Echo marginals $q(z_j)$ to be approximately Gaussian, with the Anderson-Darling test failing to reject the null hypothesis of Gaussianity for any dimension.  Nevertheless, the joint marginal $q(\vz)$ may still be dependent (see App.~\ref{app:tc}).

Individual dimensions are also are free to learn different means and variances without incurring a penalty in the objective, with factors generally keeping more mutual information with the data having less variance in the marginals.  The highest mean dimension in the Omniglot plot corresponds to an `unused' dimension that saturates the lower limit on achievable rate.

\begin{minipage}[t]{\textwidth}
\includegraphics[trim=40 0 0 0, clip, width=.5\textwidth]{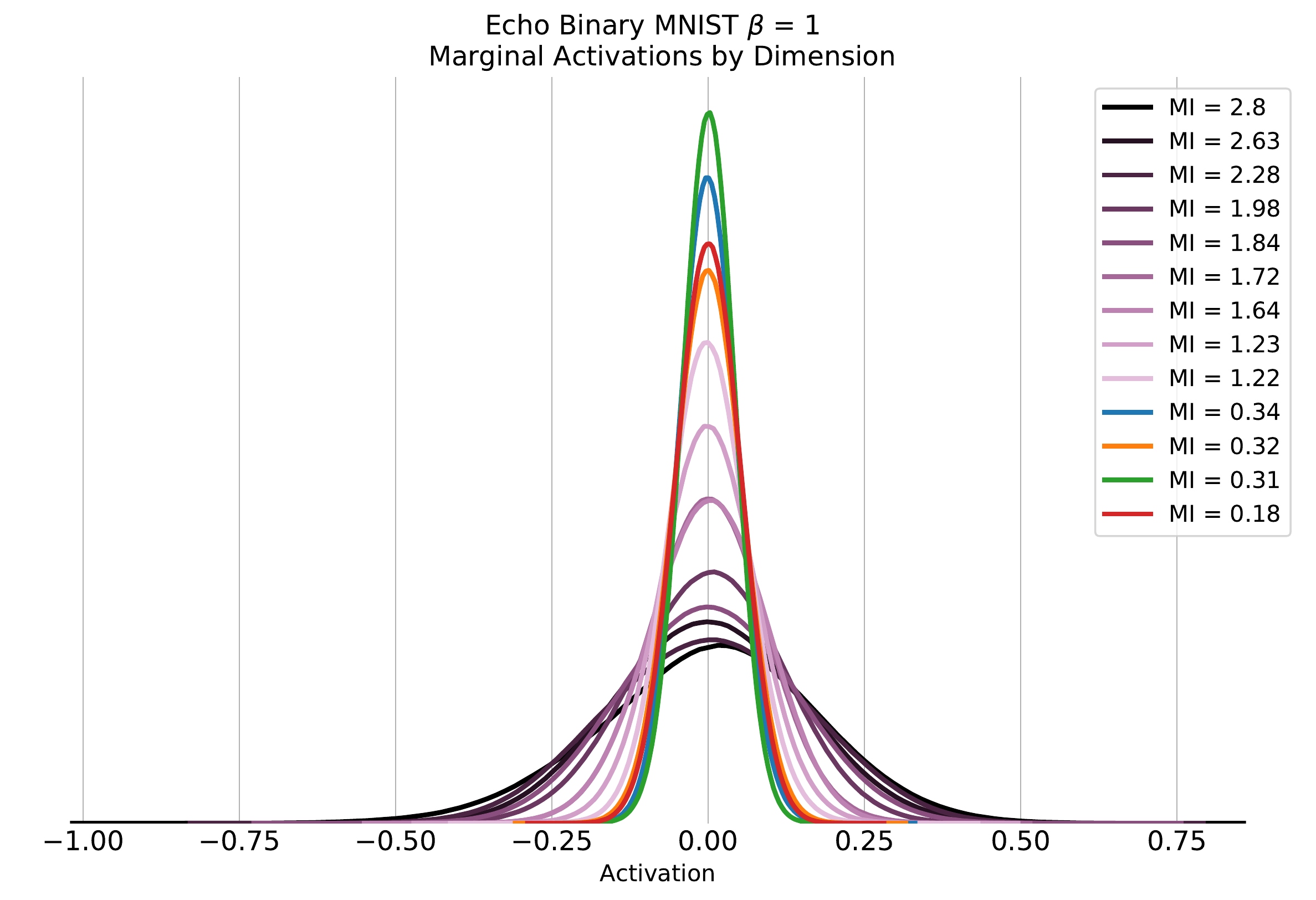}%
\includegraphics[trim=40 0 0 0, clip, width=.5\textwidth]{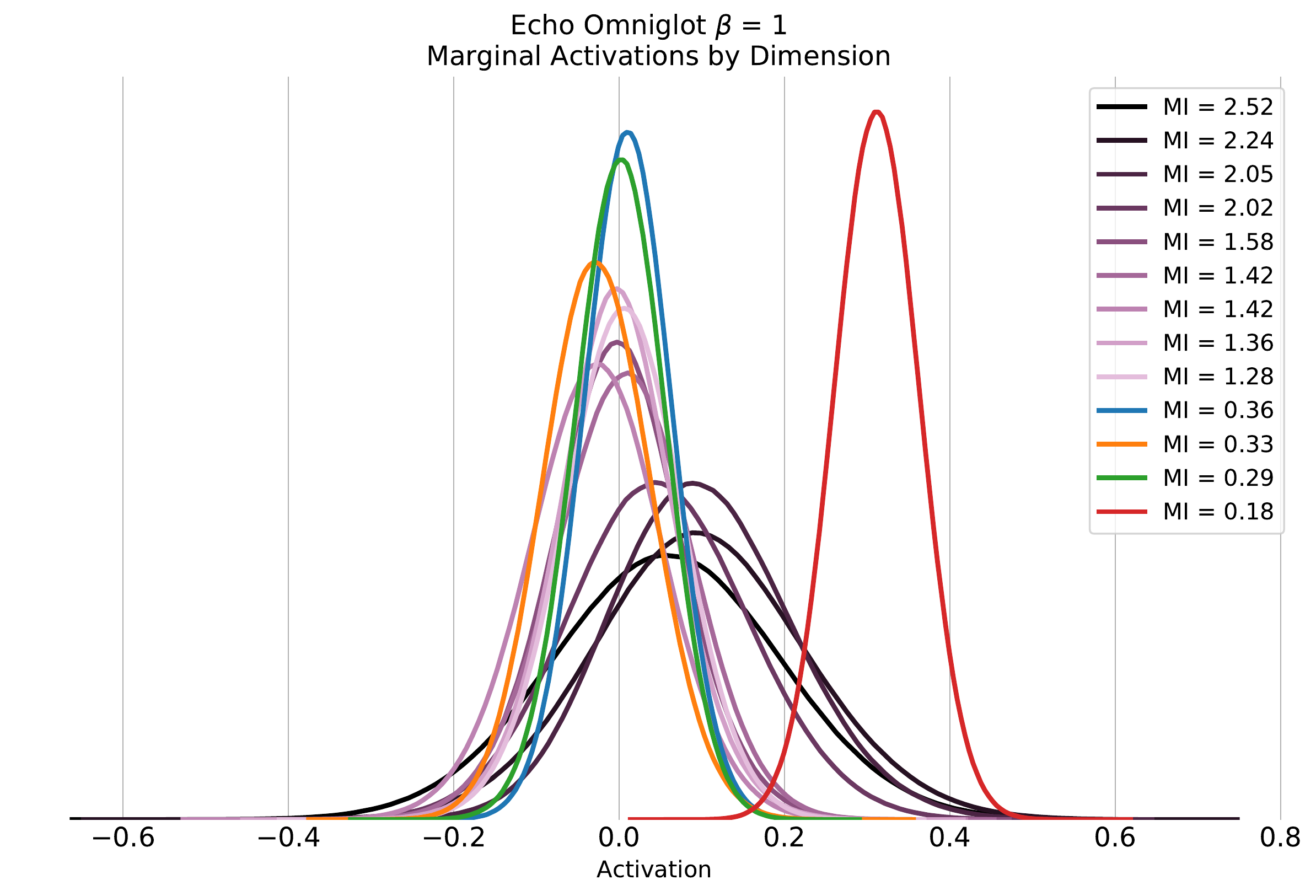}
\captionof{figure}{Marginal Activations by Dimension} \label{fig:marginal_act}
\end{minipage}
\subsection{Echo $f(\vx)$ vs. $S(\vx)$}
\label{app:fxsx}

We can analyse the Echo mutual information at each data point by noting that the expression in Eq.~\ref{eq:mi} involves an expectation over $\vx$.  Since $H(Z)$ and $H(\mathcal{E})$ do not depend on $X$ in the proof of Thm.~\ref{thm:mi}, we can evaluate $- \sum_j \log s_j(\vx)$  as a pointwise mutual information.  We compare this quantity with the L2-norm of $f(\vx)$ as a proxy for signal to noise ratio.  Test examples are sorted by conditional likelihood $\dec$ on the x-axis, and we see that Echo indeed has higher mutual information on examples where the generative model likelihood is high.  Further analysis of these pointwise informations remains for future work.

\begin{minipage}[ht]{\columnwidth}
  \centering
  \includegraphics[width=0.5\columnwidth]{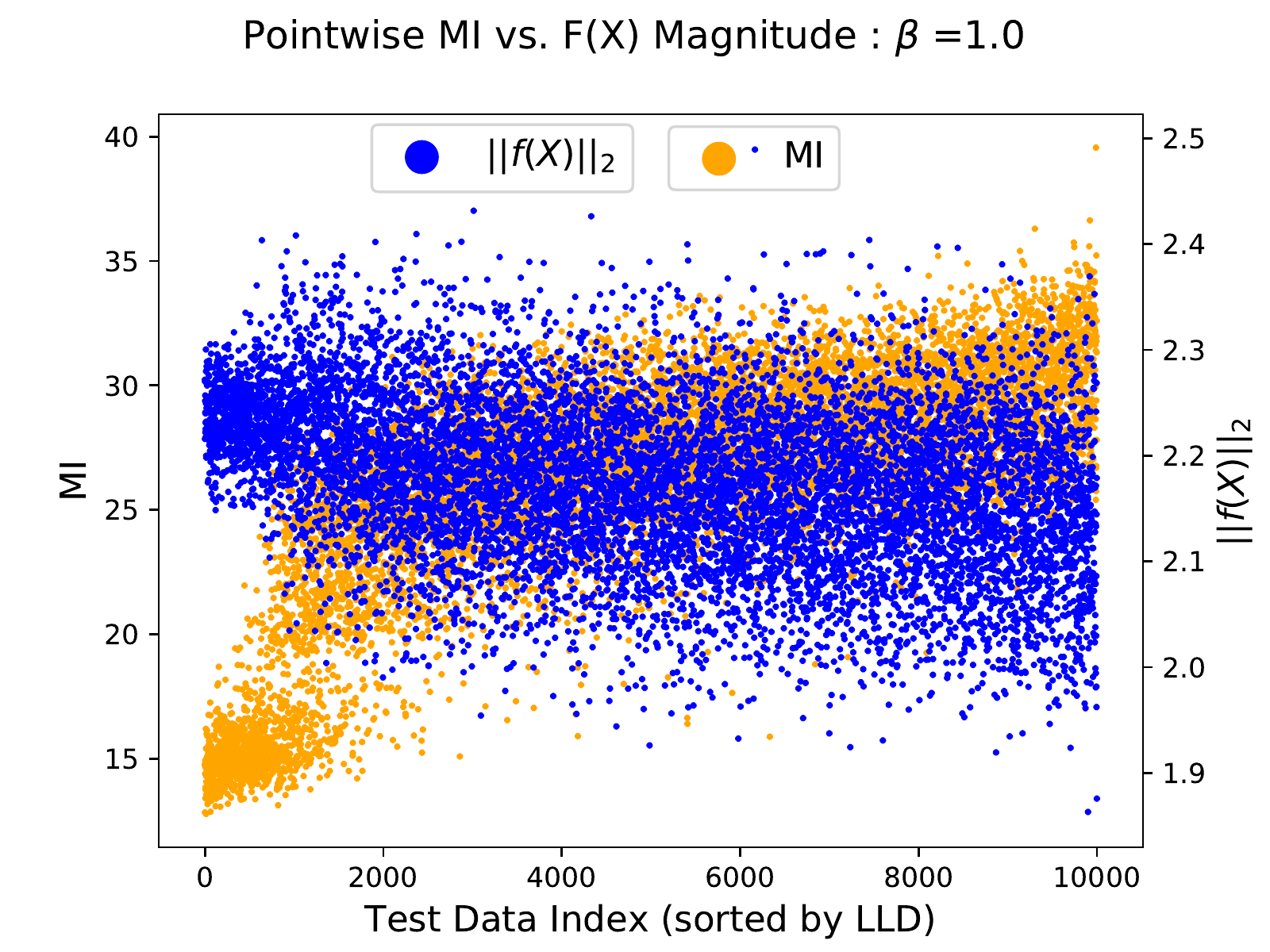}%
    \includegraphics[width=0.5\columnwidth]{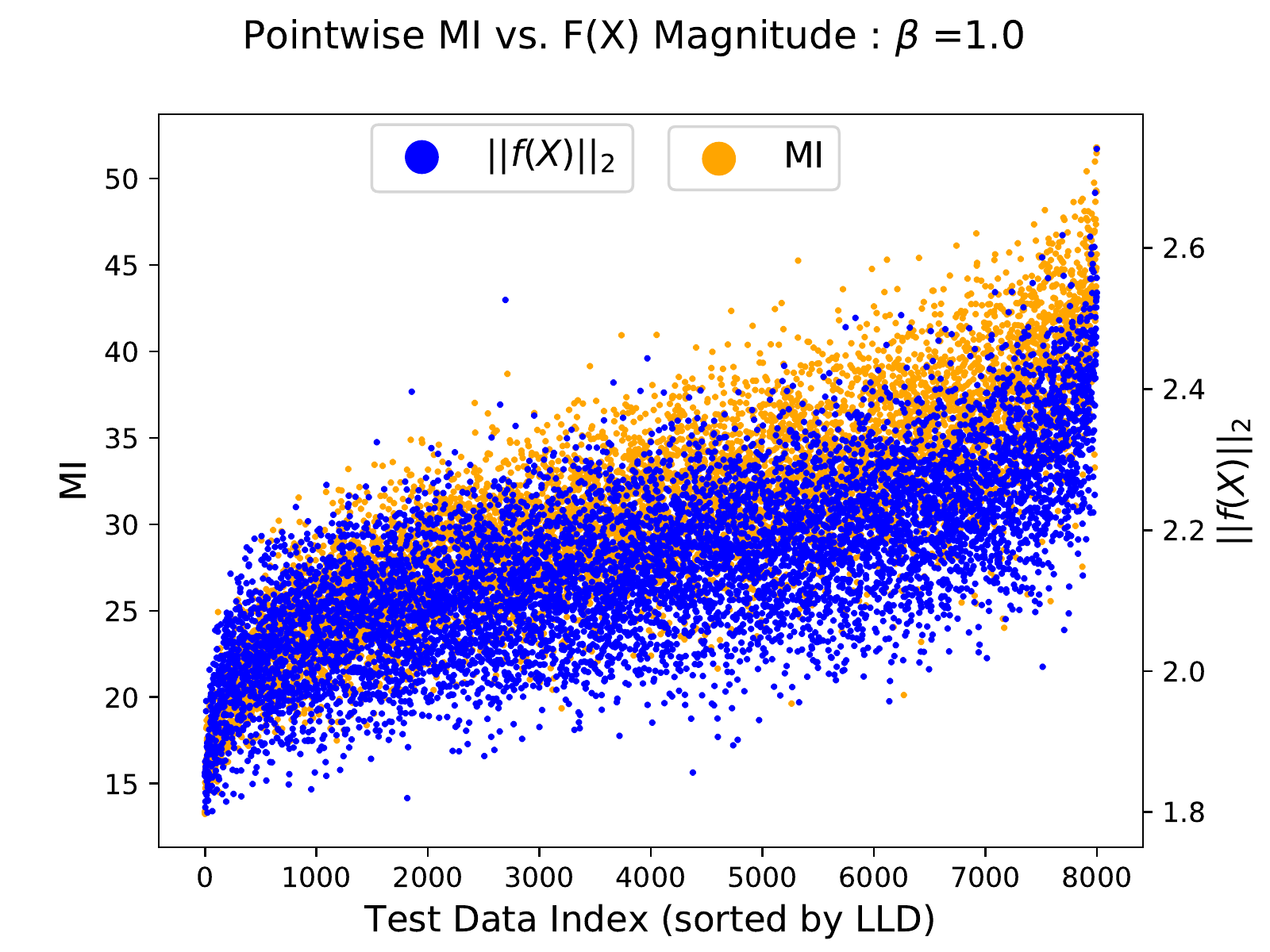}
    \captionof{figure}{Echo $f(\vx)$ vs. $S(\vx)$: Binary MNIST (left) and Omniglot (right)}
\end{minipage}


\section{Details for Experiments} \label{app:experiments}
All models were trained using a similar convolutional architecture as used in \cite{alemi2018fixing}, but with ReLU activations, unnormalized gradients, and fewer latent factors.  We use Keras notation and list convolutional layers using the arguments (filters, kernel size, stride, padding). We show an example parametrization of Echo in the hidden layer.  
\begin{itemize}
    \item Conv2D(32, 5, 1, `same')
    \item Conv2D(32, 5, 2, `same')
    \item Conv2D(64, 5, 1, `same')
    \item Conv2D(64, 5, 2, `same')
    \item Conv2D(256, 7, 1, `valid')
    \item \texttt{echo\_input} = [Dense(32, \texttt{tanh}($\cdot/16$)), \\
    Dense(32, \texttt{tf.math.log\_sigmoid})] 
    \item Lambda(\texttt{echo\_sample})(\texttt{echo\_input})
    \item Conv2DTranspose(64, 7, 1, `valid')
    \item Conv2DTranspose(64, 5, 1, `same')
    \item Conv2DTranspose(64, 5, 2, `same')
    \item Conv2DTranspose(32, 5, 1, `same')
    \item Conv2DTranspose(32, 5, 2, `same')
    \item Conv2DTranspose(32, 4, 1, `same')
    \item Conv2D(1, 4, 1, `same', activation = `sigmoid')
\end{itemize}
We trained using Adam optimization for 200 epochs, with a learning rate of 0.0003 decaying linearly to 0 over the last 100 epochs.   All experiments were run using NVIDIA Tesla V100 GPUs. 

MAF and IAF models were implemented using the Tensorflow Probability package \cite{dillon2017tensorflow}.  Each uses four steps of mean-only autoregressive flow, with each step consisting of three layers of 640 units.  \added{For the VampPrior, we used 750 pseudoinputs on all datasets.  For the \textit{IAF-Vamp} experiments, note that the VampPrior is calculated with respect to the inputs $\vz_0$ of the IAF transformation to avoid expensive density evaluations on new samples}.  This is valid since the mean-only transformation has constant Jacobian, but makes this method closely resemble VAE-Vamp.  All MMD penalties had a loss coefficient of 999, and were evaluated using a radial basis kernel with bandwidth $\sigma = 32/\sqrt{2}$ as in \cite{infovae, zhao2018lagrangian}. 


For rate-distortion experiments, we evaluated $\beta = [.05, .075, .1, .125, .15, .2, .25, .3, .4, .5, .6, .7, .8,$\\
$.9, 1, 1.5, 2, 3, 4, 6]$, with additional $\beta$ to fill in gaps in the curve as necessary. 

 For the disentanglement experiments in Sec. \ref{sec:disentanglement}, we followed the architecture and hyperparameters in \citet{locatello2018challenging}.  We trained for 300,000 gradient steps on both the full dataset and the downsampled dataset with dependent factors.  The visualization in Figure \ref{fig:disent_plot} was generated using code from \citet{chen2018isolating}.

Code implementing these experiments can be found at \small \texttt{https://github.com/brekelma/echo}.

\end{document}